\def\tr{\mbox{tr}}
\def\mB{{\mathcal B}}
\def\mD{{\mathcal D}}
\def\mH{{\mathcal H}}
\def\mI{{\mathcal I}}
\def\mL{{\mathcal L}}
\def\mM{{\mathcal M}}
\def\mN{{\mathcal N}}
\def\mO{{\mathcal O}}
\def\mY{{\mathcal Y}}
\DeclareMathAlphabet\mathbfcal{OMS}{cmsy}{b}{n}
\def\0{{\bf 0}}
\def\1{{\bf 1}}
\def\bA{{\bf A}}
\def\bD{{\bf D}}
\def\bI{{\bf I}}
\def\bN{{\bf N}}
\def\bS{{\bf S}}
\def\bV{{\bf V}}
\def\bX{{\bf X}}
\def\bY{{\bf Y}}
\def\bZ{{\bf{Z}}}
\def\be{{\bf e}}
\def\bn{{\bf n}}
\def\bx{{\bf x}}
\def\by{{\bf y}}
\def\bz{{\bf z}}
\def\mmE{{\mathbb E}}
\def\mmR{{\mathbb R}}
\def\mmZ{{\mathbb Z}}
\def\trsp{{\sf T}}
\def\bx{{\bf x}}
\def\bX{{\bf X}}
\def\by{{\bf y}}
\def\bY{{\bf Y}}
\def\bz{{\bf z}}
\def\tr{\mathrm{tr}}
\newtheorem{deftn}{Definition}
\newtheorem{thm}{Theorem}
\newtheorem*{*thm}{Theorem}
\newtheorem{prop}{Proposition}
\newtheorem{lemma}{Lemma}
\newtheorem*{*lemma}{Lemma}
\newenvironment*{proof}{\textbf{Proof}\quad}{\hfill $\square$\par}
\newcommand*{\eg}{e.g.\@\xspace}
\newcommand*{\ie}{i.e.\@\xspace}
\icmltitlerunning{Deep Regression Representation Learning with Topology}
\begin{document}

\twocolumn[
\icmltitle{Deep Regression Representation Learning with Topology}

% It is OKAY to include author information, even for blind
% submissions: the style file will automatically remove it for you
% unless you've provided the [accepted] option to the icml2023
% package.

% List of affiliations: The first argument should be a (short)
% identifier you will use later to specify author affiliations
% Academic affiliations should list Department, University, City, Region, Country
% Industry affiliations should list Company, City, Region, Country

% You can specify symbols, otherwise they are numbered in order.
% Ideally, you should not use this facility. Affiliations will be numbered
% in order of appearance and this is the preferred way.
\icmlsetsymbol{equal}{*}

\begin{icmlauthorlist}
\icmlauthor{Shihao Zhang}{soc}
\icmlauthor{Kenji Kawaguchi}{soc}
\icmlauthor{Angela Yao}{soc}
\end{icmlauthorlist}

\icmlaffiliation{soc}{National Unviersity of Singapore}

\icmlcorrespondingauthor{Shihao Zhang}{zhang.shihao@u.nus.edu}

% You may provide any keywords that you
% find helpful for describing your paper; these are used to populate
% the "keywords" metadata in the PDF but will not be shown in the document
\icmlkeywords{Machine Learning, ICML}

\vskip 0.3in
]

% this must go after the closing bracket ] following \twocolumn[ ...

% This command actually creates the footnote in the first column
% listing the affiliations and the copyright notice.
% The command takes one argument, which is text to display at the start of the footnote.
% The \icmlEqualContribution command is standard text for equal contribution.
% Remove it (just {}) if you do not need this facility.

\printAffiliationsAndNotice{}  % leave blank if no need to mention equal contribution
% \printAffiliationsAndNotice{\icmlEqualContribution} % otherwise use the standard text.

\begin{abstract}

Most works studying representation learning focus only on classification and neglect regression. Yet, the learning objectives and, therefore, the representation topologies of the two tasks are fundamentally different: classification targets class separation, leading to disconnected representations, whereas regression requires ordinality with respect to the target, leading to continuous representations. We thus wonder how the effectiveness of a regression representation is influenced by its topology, with evaluation based on the Information Bottleneck (IB) principle. The IB principle is an important framework that provides principles for learning effective representations. We establish two connections between it and the topology of regression representations. The first connection reveals that a lower intrinsic dimension of the feature space implies a reduced complexity of the representation $\bZ$. This complexity can be quantified as the conditional entropy of $\bZ$ on the target $\bY$, and serves as an upper bound on the generalization error. The second connection suggests a feature space that is topologically similar to the target space will better align with the IB principle. Based on these two connections, we introduce PH-Reg, a regularizer specific to regression that matches the intrinsic dimension and topology of the feature space with the target space. Experiments on synthetic and real-world regression tasks demonstrate the benefits of PH-Reg. Code: \url{https://github.com/needylove/PH-Reg}.

\end{abstract}

\section{Introduction}
\label{sec:introduction}
Regression is a fundamental task in machine learning in which input samples are mapped to a continuous target space. Representation learning empowers models to automatically extract, transform, and leverage relevant information from data, leading to improved performance. The information bottleneck (IB) principle \citep{shwartz2017opening} provides a theoretical framework and guiding principle for learning effectiveness representations. The IB principle suggests learning a representation $\bZ$ with sufficient information about the target $\bY$ but minimal information about the input $\bX$. For representation $\bZ$, sufficiency keeps all necessary information on $\bY$, while the minimality reduces $\bZ$'s complexity and prevents overfitting. The optimal representation, as specified by~\citet{achille2018information, achille2018emergence}, is the most useful (sufficient) and minimal. Yet, they specified classification and neglect regression.

The IB principle is applicable to both classification and regression in that both learn minimal and sufficient representations. However, there are some fundamental differences. For example, classification shortens the distance between features belonging to the same class while elongating the distance between features of different classes; the shortening and elongating of distances can be interpreted as minimality and sufficiency, respectively~\citep{boudiaf2020unifying}. The two effects lead to disconnected representations \citep{brown2022verifying}. By contrast, in regression, the representations are shown to be continuous, connected and form an ordinal relationship with respect to the target \citep{zhang2023improving}. The disconnected and connected representations are topologically different, as they have different $0^{th}$ Betti numbers. The $0^{th}$ Betti number represents the connectivity in topology, influencing the `shape' of the feature space\footnote{In this work, the feature space represents the set of projected data points, \ie the manifold, rather than the entire ambient space.}. While there are a few works investigating the influence of the representation topology in classification \cite{hofer2019connectivity,chen2019topological}, regression is overlooked. We thus wonder what topology the feature space should have for effective regression and how the topology of the feature space is connected to the IB principle. 

In this work, we establish two connections between the topology of the feature space and the IB principle for regression representation learning in deep learning. To establish the connections, we first demonstrate that minimizing the conditional entropies $\mH(\bY|\bZ)$ and $\mH(\bZ|\bY)$ can better align with the IB principle. The entropy of a random variable reflects its uncertainty. Specifically, for regression, the conditional entropy $\mH(\bZ|\bY)$ is linked to the minimality of $\bZ$ and serves as an upper-bound on the generalization error.

The first connection reveals that $\mH(\bZ|\bY)$ is bounded by the intrinsic dimension (ID) of the feature space, which suggests encouraging a lower ID feature space for better generalization ability. However, the ID of the feature space should not be less than the ID of the target space to guarantee sufficient representation capabilities. Thus, a feature space with ID equals the target space is desirable.
The intrinsic dimension (ID) is a fundamental property of data topology. Intuitively, it can be regarded as the minimal number of dimensions to describe the representation without significant information loss~\citep{ansuini2019intrinsic}.

The second connection reveals that having a representation $\bZ$ homeomorphic to the target space $\bY$ is desirable when both $\mH(\bY|\bZ)$ and $\mH(\bZ|\bY)$ are minimal. The homeomorphism between two spaces can be described intuitively as the continuous deformation of one space to the other. From a topological viewpoint, two spaces are considered the same if they are homeomorphic \cite{hatcher2005algebraic}. However, directly enforcing homeomorphism can be challenging to achieve since the representation $\bZ$ typically lies in a high-dimensional space that cannot be modeled without sufficient data samples. As such, we opted to enforce the topological similarity between the target and feature spaces.  Here, topological similarity refers to the similarity in topological features, such as clusters and loops, and their localization \cite{trofimov2023learning}.

These connections naturally inspire us to learn a regression feature space that is topologically similar to and has the same intrinsic dimension as the target space. To this end, we introduce a regularizer called Persistent Homology Regression Regularizer (PH-Reg).
In classification, interest has grown in regulating the intrinsic dimension. For instance, \citet{zhu2018ldmnet} explicitly penalizes intrinsic dimension as regularization, while \citet{ma2018dimensionality} uses intrinsic dimensions as weights for noise label correction. However, a theoretical justification for using intrinsic dimension as a regularizer is lacking, and they overlook the topology of the target space.  Experiments on various regression tasks demonstrate the effectiveness of PH-Reg.
Our main contributions are three-fold:

\begin{itemize}
    \item We are the first to investigate effective feature space topologies for regression. 
    We establish novel connections between the topology of the feature space and the IB principle, which also provides justification for exploiting intrinsic dimension as a regularizer.
    \item Based on the IB principle, we demonstrate that $\mH(\bZ|\bY)$ serves as an upper-bound on the generalization error in regression, providing insights for enhancing generalization ability.
    \item We introduce a regularizer named PH-Reg based on the established connections.  Applying PH-Reg achieves significant improvement in coordinate prediction on synthetic datasets and real-world regression tasks such as super-resolution, age estimation, and depth estimation.
\end{itemize}

\section{Related Works}

\textbf{Intrinsic dimension}. 
Raw data and learned data representations often lie on lower intrinsic dimension manifolds but are embedded within a higher-dimensional ambient space~\citep{bengio2013representation}. The intrinsic dimension of the feature space from the last hidden layer has shown a strong connection with the network generalization ability \citep{ansuini2019intrinsic}, and several widely used regularizers like weight decay and dropout effectively reduce the intrinsic dimension~\citep{brown2022relating}. Commonly, the generalization ability increases with the decrease of the intrinsic dimension. However, a theoretical justification for why this happened is lacking, and our established connections provide an explanation for this phenomenon in regression.

The intrinsic dimension can be estimated by methods such as the TwoNN \citep{facco2017estimating} and Birdal's estimator \citep{birdal2021intrinsic}. Among the relevant studies, \cite{birdal2021intrinsic} is the most closely related to ours.
This work demonstrates that the generalization error can be bounded by the intrinsic dimension of training trajectories, which possess fractal structures. However, their analysis is based on the parameter space, while ours is on the feature space. Furthermore, we take the target space into consideration, ensuring sufficient representation capabilities.

\textbf{Topological data analysis}. Topological data analysis is a recent field that provides a set of topological and geometric tools to infer robust features for complex data \cite{chazal2021introduction}. It can be coupled with feature learning to ensure that learned representations are robust and reflect the training data's underlying topology and geometric information \cite{rieck2020uncovering}. It has benefitted diverse tasks ranging from fMRI data analysis~\citep{rieck2020uncovering} to and AI-generated text detection  \citep{tulchinskii2023intrinsic}.  It can also be used as a tool to compare data representations~\citep{barannikov2021representation} and data manifolds~\citep{barannikov2021manifold}. To learn representations that reflect the topology of the training data, a common strategy is to preserve different dimensional topologically relevant distances of the input space and the feature space \citep{moor2020topological,trofimov2023learning}. We follow \citet{moor2020topological} to preserve topology information. However, unlike classification, regression's target space is naturally a 
metric space rich in topology induced by the metric, and crucial for the intended task. Consequently, we leverage the topology of the target space, marking the first exploration of topology specific to effective representation learning for regression.

\section{Learning a Desirable Regression Representation}
\label{Sec:thm}

From a topology point of view, what topological properties should a representation for regression have? More simply put, what `shape' or structure should the feature space have for effective regression? In this work, we suggest a desirable regression representation should (1) have a feature space topologically similar to the target space and (2) the intrinsic dimension of the feature space should be the same as the target space. We arrive at this conclusion by establishing two connections between the topology of the feature space and the Information Bottleneck principle. 

Below, we first introduce the notations in Sec. \ref{subsection:Notation} and connect the IB principle with two terms $\mH(\bZ|\bY)$ and $\mH(\bY|\bZ)$ in Sec. \ref{subsection:IB}. We then demonstrate that $\mH(\bZ|\bY)$  is the upper-bound on the generalization error in regression in Sec. \ref{subsection:GeneralizationError}. This later provides justification for why lower ID implies higher generalization ability. Subsequently, we establish the first connection in Sec. \ref{subsection:Connection1}, revealing that $\mH(\bZ|\bY)$ is bounded by the ID of the feature space. Finally, we establish the second connection, the topological similarity between the feature and target spaces, in Sec. \ref{subsection:Connection2}. Two motivating examples are provided in Sec. \ref{subsection:Example} to enhance understanding of the two connections intuitively.

\subsection{Notations}
\label{subsection:Notation}
Consider a dataset $S=\{\bx_i,\bz_i, \by_i\}_{i=1}^N$ with $N$ samples, sampled from a distribution $P$ with the corresponding label $\by_i \in \mY$. To predict $\by_i$, a neural network first encodes the input $\bx_i$ to a representation $\bz_i \in \mmR^d$ before apply a regressor $f$, \ie $\hat{\by}_i = f(\bz_i)$. The encoder and the regressor $f$ are trained by minimizing a task-specific regression loss $\mL_m$ based on a distance between $\hat{\by}_i$ and $\by_i$, \ie $\mL_m = g(||\hat{\by}_i -\by_i||_2)$. Typically, an L2 loss is used, \ie $\mL_m = \frac{1}{N}\sum_i||\hat{\by}_i -\by_i||_2$, though more robust variants exist such as L1 or the scale-invariant error \citep{eigen2014depth}. Note that the dimensionality of $\by$ is task-specific and is not limited to 1. We denote $\bX, \bY$, and $\bZ$ as random variables representing $\bx, \by$, and $\bz$, respectively.

\subsection{$\mI\mB$ purely between $\bY$ and $\bZ$}
\label{subsection:IB}

The IB tradeoff is a practical implementation of the IB principle in machine learning. It suggests that a desirable $\bZ$ should contain sufficient information about the target $\bY$ (\ie, maximize the mutual information $\mI(\bZ; \bY)$) and minimal information about the input $\bX$ (\ie, minimize $\mI(\bZ; \bX)$). The trade-off between the two aims is typically formulated as a minimization of the associated Lagrangian, $\mI\mB := - \mI(\bZ; \bY) + \beta \mI(\bZ; \bX)$, where $\beta>0$ is the Lagrange multiplier.

To establish the connections, we first formulate the IB tradeoff into relationships purely between $\bY$ and $\bZ$. The following theorem shows that minimizing the conditional entropies  $\mH(\bY|\bZ)$ and $\mH(\bZ|\bY)$ can be seen as a proxy for optimizing the IB tradeoff when $\beta\in(0,1)$:

\begin{thm}
\label{thm:bottleneck}
Assume that the conditional entropy $\mH(\bZ|\bX)$ is a fixed constant for $\bZ \in \mathcal Z$ for some set  $\mathcal Z$ of the random variables, or that $\bZ$ is deterministic given $\bX$. Then,  $\min_{\bZ} ~ \mI\mB= \min_{\bZ} ~ \{(1-\beta) \mH(\bY|\bZ) + \beta \mH(\bZ|\bY) \}$.
\end{thm}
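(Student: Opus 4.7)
The proof is essentially an algebraic rewriting of the Lagrangian $\mathcal{IB} = -\mI(\bZ;\bY) + \beta\,\mI(\bZ;\bX)$ so that every dependence on $\bZ$ is packaged into the two conditional entropies $\mH(\bY|\bZ)$ and $\mH(\bZ|\bY)$, while the remaining pieces become $\bZ$-independent constants under the stated assumption. The plan is to expand both mutual informations using the standard identities $\mI(\bZ;\bY) = \mH(\bY) - \mH(\bY|\bZ)$ and $\mI(\bZ;\bX) = \mH(\bZ) - \mH(\bZ|\bX)$, and then eliminate the unconditional entropy $\mH(\bZ)$ via the symmetric identity $\mH(\bZ) = \mI(\bZ;\bY) + \mH(\bZ|\bY) = \mH(\bY) - \mH(\bY|\bZ) + \mH(\bZ|\bY)$.

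Concretely, I would first substitute the two mutual-information expansions to obtain
\[
\mathcal{IB} = -\mH(\bY) + \mH(\bY|\bZ) + \beta\,\mH(\bZ) - \beta\,\mH(\bZ|\bX),
\]
and then replace $\mH(\bZ)$ by $\mH(\bY) - \mH(\bY|\bZ) + \mH(\bZ|\bY)$. Collecting terms yields
\[
\mathcal{IB} = (\beta-1)\,\mH(\bY) + (1-\beta)\,\mH(\bY|\bZ) + \beta\,\mH(\bZ|\bY) - \beta\,\mH(\bZ|\bX).
\]
The first summand is independent of $\bZ$ entirely, and the last summand is independent of $\bZ$ on the minimization domain by hypothesis: either it is constant on $\mathcal{Z}$, or $\bZ$ is a deterministic function of $\bX$ (in which case $\mH(\bZ|\bX)$ collapses to a common constant across the admissible $\bZ$). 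Hence the argmin and min (up to an additive constant that does not affect the minimizer) coincide with those of $(1-\beta)\mH(\bY|\bZ) + \beta\mH(\bZ|\bY)$, yielding the claim.

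There is no real obstacle; the only point requiring a sentence of care is the ``$\bZ$ deterministic given $\bX$'' clause, where I would note that this means $\bZ = \phi(\bX)$ for some measurable $\phi$, so $\mH(\bZ|\bX)$ takes the same value (zero in the discrete case, or a common reference value in the differential-entropy setting used consistently in the paper) for every such $\bZ$, and therefore again drops out of the minimization. After that, the conclusion $\min_{\bZ}\mathcal{IB} = \min_{\bZ}\{(1-\beta)\mH(\bY|\bZ) + \beta\mH(\bZ|\bY)\}$ follows, since the two objectives differ only by a constant on the feasible set.
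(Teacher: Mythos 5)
Your proposal is correct and follows essentially the same route as the paper: both expand the two mutual informations, eliminate $\mH(\bZ)$ via $\mH(\bZ)=\mH(\bY)-\mH(\bY|\bZ)+\mH(\bZ|\bY)$, arrive at the identical expression $(\beta-1)\mH(\bY)+(1-\beta)\mH(\bY|\bZ)+\beta\mH(\bZ|\bY)-\beta\mH(\bZ|\bX)$, and then discard the two $\bZ$-independent terms. Your treatment of the deterministic case is, if anything, slightly more explicit than the paper's.
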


The detailed proof of Theorem \ref{thm:bottleneck} is provided in Appendix \ref{appendix:thm:bottleneck}. Here, we provide a brief overview by decomposing the terms. The conditional entropy $\mH(\bY|\bZ)$ encourages the learned representation $\bZ$ to be informative about the target variable $\bY$.
When considering $\mI(\bZ; \bY)$ as a signal, the term $\mH(\bZ|\bY)$ in Theorem \ref{thm:bottleneck} can be thought of as noise, since $\mI(\bZ; \bY) = \mH(\bZ) - \mH(\bZ|\bY)$ and $\mH(\bZ)$ represents the total information. Consequently, minimizing $\mH(\bZ|\bY)$ can be seen as learning a minimal representation by reducing noise. The minimality can reduce the complexity of $\bZ$ and prevent neural networks from overfitting \cite{tishby2015deep}.

It is worth mentioning that the fixed constant assumption given in Theorem \ref{thm:bottleneck} holds for most neural networks, as neural networks are commonly deterministic functions. For stochastic representations, we commonly learn a distribution $p(\bZ|\bX)$ approaching a fixed distribution, like the standard Gaussian distribution in VAE. In this case, $\mH(\bZ|\bX)$ will tend to be a fixed constant for $\bZ$. Discussions about the choice of $\beta$, \ie $\beta\in(0,1)$ or $\beta >1$, and more illustrations are given in Appendix \ref{appendix:explainTheorem1}.

\subsection{$\mH(\bZ|\bY)$ upper-bound on the generalization error}
\label{subsection:GeneralizationError}
Next, we show $\mH(\bZ|\bY)$ upper-bound on the generalization error.

\begin{thm}
\label{thm:bound}
Consider dataset $S = \{\bx_i,\bz_i, \by_i\}^N_{i=1}$ sampled from distribution $P$, where $\bx_i$ is the input, $\bz_i$ is the corresponding representation, and $\by_i$ is the label. Let  $d_{max} = \max_{\by\in \mY} \min_{\by_i \in S} ||\by-\by_i||_2$  be the maximum distance of $\by$ to its nearest $\by_i$. Assume  $(\bZ|\bY=\by_i)$ follows a distribution $\mD$ and 
the dispersion of $\mD$ is bounded by its entropy:
\begin{equation}
    \mmE_{\bz \sim \mD}[||\bz - \bar{\bz}||_2] \leq Q(\mH(\mD)),
\end{equation}
where  $\bar{\bz}$ is the mean of the distribution $\mD$ and $Q(\mH(\mD))$ is some function of 
$\mH(\mD)$. 
Assume the regressor f is $L_1$-Lipschitz continuous, then as $d_{max} \rightarrow 0$, we have:
%~\AY{XX}
\begin{align} \label{eq:1}
    &  \mmE_{\{\bx, \bz, \by\} \sim P} [||f(\bz) - \by||_2]  \\
    &\leq  \mmE_{\{\bx, \bz, \by\} \sim S}(||f(\bz) -\by||_2) 
    + 2L_1 Q(\mH(\bZ|\bY))
\end{align}
\end{thm}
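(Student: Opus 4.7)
The plan is to reduce the population risk to the empirical risk by coupling each population draw with its nearest training sample, and then to absorb the residual into $Q(\mH(\bZ|\bY))$ by combining the Lipschitz continuity of $f$ with the entropy--dispersion hypothesis. Concretely, for $(\bx,\bz,\by) \sim P$, let $(\bx^\star, \bz^\star, \by^\star) \in S$ denote the training example whose label achieves $\min_{\by_i \in S} \|\by - \by_i\|_2$, so that $\|\by - \by^\star\|_2 \le d_{max}$ by definition. Two applications of the triangle inequality together with the $L_1$-Lipschitz property of $f$ then give
\begin{align*}
\|f(\bz) - \by\|_2 &\le \|f(\bz^\star) - \by^\star\|_2 + \|f(\bz) - f(\bz^\star)\|_2 + \|\by^\star - \by\|_2\\
&\le \|f(\bz^\star) - \by^\star\|_2 + L_1 \|\bz - \bz^\star\|_2 + d_{max}.
\end{align*}
Taking expectation over $P$ and sending $d_{max}\to 0$, the first right-hand term becomes $\mmE_S[\|f(\bz)-\by\|_2]$ and $d_{max}$ vanishes, reducing the proof to controlling $\mmE[\|\bz-\bz^\star\|_2]$.

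For this residual, I would argue that as $d_{max}\to 0$ one has $\by^\star\to\by$, so that $(\bz,\bz^\star)$ is, in the limit, a pair of independent draws from the common conditional law $\mD = (\bZ\,|\,\bY=\by)$ with mean $\bar{\bz}$. A single triangle inequality followed by the hypothesised dispersion bound applied separately to $\bz$ and to $\bz^\star$ yields
\begin{align*}
\mmE[\|\bz-\bz^\star\|_2] &\le \mmE_{\bz\sim\mD}[\|\bz-\bar{\bz}\|_2] + \mmE_{\bz^\star\sim\mD}[\|\bz^\star-\bar{\bz}\|_2]\\
&\le 2\,Q(\mH(\mD)).
\end{align*}
Integrating against the marginal law of $\bY$ and reading $Q(\mH(\bZ|\bY))$ as the induced averaged bound, one substitutes back into the previous display to obtain the claimed residual $2L_1 Q(\mH(\bZ|\bY))$.

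The step I expect to be the most delicate is the limit argument that aligns the two conditional distributions: asserting $(\bZ|\bY=\by^\star) \to (\bZ|\bY=\by)$ as $\by^\star \to \by$ requires an implicit continuity of $\by\mapsto(\bZ|\bY=\by)$ that is not spelled out in the hypotheses and must be either invoked as a regularity assumption or discharged through the ``$d_{max}\to 0$'' language. A secondary, milder subtlety is identifying the population-weighted nearest-neighbour empirical term with the uniform empirical risk $\mmE_S$; this identification is routine in the dense-sample limit but worth stating explicitly. The two triangle inequalities and the Lipschitz step themselves are entirely standard and should go through without incident.
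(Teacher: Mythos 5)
Your proposal is correct and follows essentially the same route as the paper: the paper partitions the population by nearest training label (your nearest-neighbour coupling, phrased as Voronoi cells $N_i$), applies the same triangle-inequality decomposition through $f(\bz_i)$ and $\by_i$ with the Lipschitz step, and bounds $\|\bz-\bz_i\|_2$ by passing through the conditional mean $\bar{\bz}_i$ exactly as you do, yielding the factor $2L_1 Q(\mH(\bZ|\bY))$. The delicate limit you flag — identifying the within-cell conditional law with $(\bZ|\bY=\by_i)$ as $d_{max}\to 0$ — is precisely the step the paper also handles only as an approximation.
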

The detailed proof of Theorem \ref{thm:bound} is provided in Appendix \ref{appendix:thm:bound}, and a comparison to a related bound is given in Appendix \ref{appendix:compareBound}. Theorem \ref{thm:bound} states that the generalization error $|\mmE_P[||f(\bz) - \by||_2]-\mmE_S[||f(\bz) - \by||_2]|$, defined as the difference between the population risk $\mmE_P[||f(\bz) - \by||_2]$ and the empirical risk $\mmE_S[||f(\bz) - \by||_2]$, is bounded by the $\mH(\bZ|\bY)$ in Theorem \ref{thm:bottleneck}. Theorem \ref{thm:bound} suggests minimizing $\mH(\bZ|\bY) $ will improve generalization performance.

The tightness of the bound in Theorem \ref{thm:bound} depends on the function $Q$, which aims to bound the dispersion (\ie, $\mmE_{\bz \sim \mD}[||\bz - \bar{\bz}||_2]$) of a distribution by its entropy. 
For a given distribution $\mD$, $Q$ exists when its dispersion and entropy are bounded, as we can find a $Q$ to scale its entropy larger than its dispersion in this case. Proposition \ref{prop_supportTheorem1} provides examples of the function $Q$ for various distributions, and the corresponding proof is provided in Appendix \ref{appendix:thm:bound}.

\begin{prop}
\label{prop_supportTheorem1}
    If $\mD$ is a multivariate normal distribution $\mN(\bar{\bz}, \Sigma = k\bI)$, where $k>0$ is a scalar and $\bar{\bz}$ is the mean of the distribution $\mD$. Then, the function $Q(\mH(\mD))$ in Theorem \ref{thm:bound} can be selected as $Q(\mH(\mD)) = \sqrt{\frac{d(e^{2\mH(\mD)})^{\frac{1}{d}}}{2\pi e}}$, where $d$ is the dimension of $\bz$. If $\mD$ is a uniform distribution, then the $Q(\mH(\mD))$ can be selected as $Q(\mH(\mD)) = \frac{e^{\mH(\mD)}}{\sqrt{12}}$.
\end{prop}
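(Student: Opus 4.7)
The plan is to deduce both bounds from Jensen's inequality combined with the closed-form differential entropies of the two distributions. Concretely, Jensen gives
\[
\mmE_{\bz \sim \mD}[||\bz - \bar{\bz}||_2] \leq \sqrt{\mmE_{\bz \sim \mD}[||\bz - \bar{\bz}||_2^2]} = \sqrt{\tr(\Sigma)},
\]
so proving the proposition reduces to expressing $\tr(\Sigma)$ purely in terms of $\mH(\mD)$ for each case.

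For the isotropic Gaussian $\mN(\bar{\bz}, k\bI)$ on $\mmR^d$, I would invoke the standard formula $\mH(\mD) = \tfrac{d}{2}\log(2\pi e k)$ and invert it to obtain $k = (e^{2\mH(\mD)})^{1/d}/(2\pi e)$. Substituting $\tr(\Sigma) = dk$ into the Jensen bound then yields $\sqrt{d(e^{2\mH(\mD)})^{1/d}/(2\pi e)}$, which is exactly the claimed $Q$ for the Gaussian case.

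For the uniform distribution I would specialize to a scalar uniform on an interval of length $L$, since the stated $Q$ has no explicit dependence on the ambient dimension. Its entropy is $\mH(\mD) = \log L$, giving $L = e^{\mH(\mD)}$, and its variance is $L^2/12$. Jensen then produces $\mmE[|z-\bar{z}|] \leq L/\sqrt{12} = e^{\mH(\mD)}/\sqrt{12}$, matching the proposition. (For a uniform on a hypercube one obtains an analogous bound with an explicit $d$-factor, but this is not needed here.)

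There is no significant technical obstacle: both cases reduce to a short chain of substitutions once the Jensen step is in place. The only clarification worth making in the writeup is that the proposition asks merely for \emph{some} valid $Q$ meeting the dispersion-versus-entropy inequality, not the tightest one, so the slack introduced by Jensen is harmless and no sharper concentration argument (e.g.\ for the chi distribution in the Gaussian case) is required.
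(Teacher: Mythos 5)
Your proposal is correct and follows essentially the same route as the paper: bound the mean deviation by the root of the second moment via Jensen/Cauchy--Schwarz, then invert the closed-form differential entropy of the isotropic Gaussian (resp.\ the scalar uniform) to express that second moment in terms of $\mH(\mD)$. The only cosmetic difference is that the paper derives the Gaussian entropy formula from scratch rather than citing it.
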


\begin{figure*}[!t]
\centering
\subfigure[Lower intrinsic dimension] {
 \label{fig_intro_id}
\includegraphics[width=0.49\linewidth, height=0.5\columnwidth]{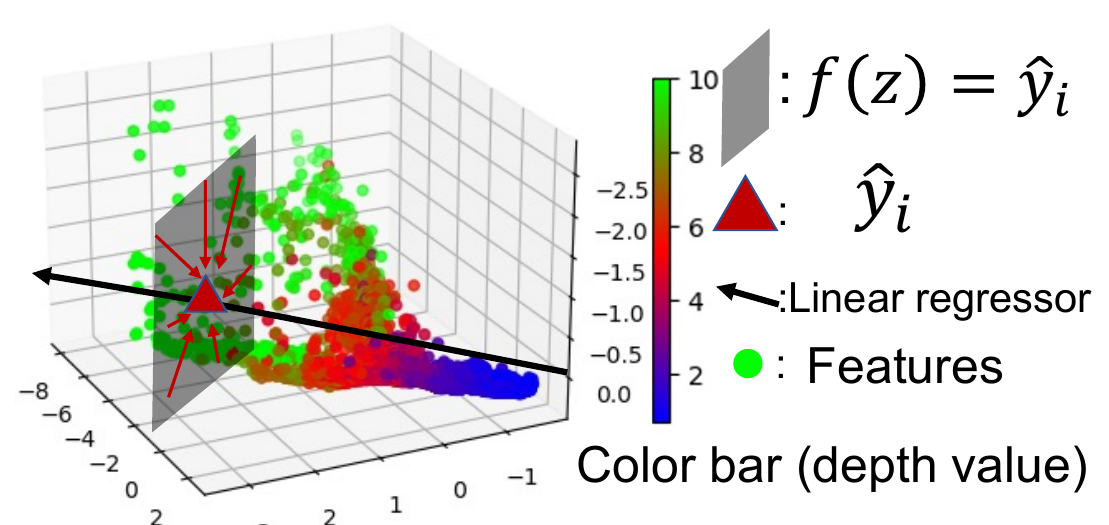}
}
\subfigure[Topology similarity ] {
 \label{fig_intro_topo}
\includegraphics[width=0.45\linewidth, height=0.5\columnwidth]{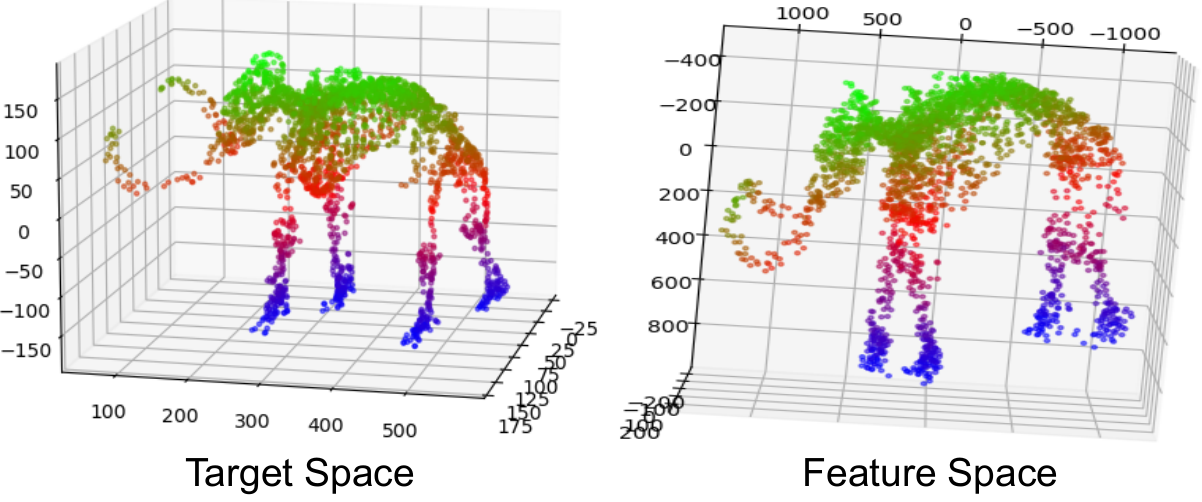}
}
\caption{(a) Visualization of the feature space from depth estimation task.
Enforcing an ID equal to the target space ($1$ dimensional) will squeeze the feature space into a line, reducing the unnecessary $\mH(\bZ|\bY=\by_i)$ corresponding to the solution space of $f(\bz) = \hat{\by}_i$ (the gray quadrilateral) for all $i$ and implying a lower $\mH(\bZ|\bY)$.
(b) Visualization of the feature space (right) and the `Mammoth' shape target space (left), see Sec. \ref{subsection:synthetic} for details. The feature space is topologically similar to the target space 
.}
\label{fig_motivation1}
\end{figure*}

\subsection{Motivating Examples}
\label{subsection:Example}

\textbf{Encouraging the same intrinsic dimension}. Figure \ref{fig_intro_id} plots pixel-wise representations of the last hidden layer's feature space, depicted as dots with different colors corresponding to ground truth depth. These representations are obtained from a batch of $32$ images from the NYU-v2 test set for depth estimation. A modified ResNet-50 produces these representations, with the last hidden layer changed to dimension 3 for visualization.
This figure provides a visualization of the last hidden layer's feature space, where the representations lay on a manifold where the ID varies locally from $1$ (blue region) to $3$ (green region). The black arrow represents the linear regressor's weight vector $\theta$, and the predicted depth $\hat{\bY} = f(\bZ) = \theta^{\trsp} \bz$ is obtained by mapping $\bZ$ (represented as dots) to $\theta$. The gray plane represents the solution space of $f(\bZ) = \hat{\by}_i$, 
and the entropy of its distribution in this plane, \ie $\mH(\bZ|\hat{\bY}=\hat{\by}_i)$, can be seen as an approximate of $\mH(\bZ|\bY=\by_i)$.

The target space for depth estimation is one-dimensional; enforcing an intrinsic dimension to match the 1D target space will squeeze the feature space into a line. Under such a scenario, 
the solution space of $f(\bz) = \hat{\by}_i$ is compressed into a point, implying $\mH(\bZ|\hat{\bY}=\hat{\by}_i)=0$ (discrete case) and a lower $\mH(\bZ|\bY=\by_i)$. Lower $\mH(\bZ|\bY=\by_i)$ for all $i$ implies a lower $\mH(\bZ|\bY)$. \textit{Thus, by controlling the ID, we obtain a lower $\mH(\bZ|\bY)$, implying a higher generalization ability.}
Since the ID of the feature space is commonly higher than the ID of the target space, the first connection generally encourages learning a lower ID feature space.

In classification, we tighten clusters for a lower $\mH(\bZ|\bY)$, while in regression, lowering the ID achieves a lower $\mH(\bZ|\bY)$.
Lowering the ID of feature space can be intuitively understood as tightening the clusters in classification, where each solution space represents a cluster in classification.

\textbf{Enforcing topological similarity}. Figure \ref{fig_intro_topo} provides a PCA visualization (from $100$ dimension to $3$ dimension, t-sne visualization can be found in Figure \ref{fig_syn}) of the 
feature space with a 'Mammoth' shape target space (see Sec. \ref{subsection:synthetic} for details). 
This feature space is topologically similar to the target space, which indicates regression potentially captures the topology of the target space. The second connection suggests improving such similarity.

\subsection{Encouraging the Same Intrinsic Dimension}
\label{subsection:Connection1}

Now, we can establish our first connection, which reveals that $\mH(\bZ|\bY)$ is bounded by the ID of the feature space. Note, intrinsic dimension is not a well-defined mathematical object, and different mathematical definitions exist \cite{ma2018dimensionality, birdal2021intrinsic}.
We first define Intrinsic Dimension following \citet{ghosh2023local}:
\begin{deftn}
\label{deftn:ID}
    (Intrinsic Dimension). We define the intrinsic dimension of the manifold $\mM$ of a random variable $\bX$ as
    \begin{align}
        \text{Dim}_{\text{ID}}\mM = \lim_{\epsilon \rightarrow 0^{+}} \mmE_{\rho \sim p(\bX)}[d_{\epsilon}(\rho)],
    \end{align}
    where $d_{\epsilon}(\rho) = \min n  ~\mathrm{s.t.}~ (\bV_1, \bV_2, \cdots, \bV_n) \leftrightarrow \bX_{\epsilon}^\rho$ can be regarded as the intrinsic dimension locally at point $\rho$ in the manifold $\mM$. $\bV_1, \bV_2, \cdots, \bV_n$ represent random random variables, $\leftrightarrow$ means there exist continuous functions $f_1, f_2$ such that $f_1(\bV_1, \bV_2, \cdots, \bV_n)=X_{\epsilon}^\rho$ and $f_2(\bX_{\epsilon}^\rho)=(\bV_1, \bV_2, \cdots, \bV_n)$.  $\bX_{\epsilon}^\rho$ is a new random variable that follows distribution $P_{\epsilon}^\rho$ given by:
    \begin{align}
        P_{\epsilon}^\rho(\bX) = 
        \begin{cases}
           \frac{P(\bX)}{c}, \quad \text{if} ~ ||\bX -\rho|| \leq \epsilon \\
           0, \quad \text{otherwise}
        \end{cases}
    \end{align}
    where $c = \int_{||\bX -\rho|| \leq \epsilon} {P(\bX)} d\bX$.
\end{deftn}

The manifold is assumed to locally resemble a $n$-dimensional Euclidean space. Intuitively, we can consider the ID as the expectation of $n$ over the distribution of this manifold.

\begin{thm}
\label{thm:entropyToID}
Assume that $\bz$ lies in a manifold $\mM$ and the $\mM_i \subset \mM$ is a 
%\hl{manifold corresponding to $\mH(\bZ|\bY=\by_i)$}. 
manifold corresponding to the distribution $(\bz|\by=\by_i)$.
Let $C(\epsilon)$ be some function of $\epsilon$:
\begin{equation}
C(\epsilon) = \int_{||\bz-\bz'|| \leq \epsilon} P'(\bz) d\bz,
\end{equation}
where $P'(\bz)$ is the probability of $\bz$ when $(\bz|\by=\by_i)$ is uniformly distributed across $\mM_i$, and $\bz'$ is any point on $\mM_i$.
Then, as $\epsilon \rightarrow 0 ^{+}$, 
we have:
\begin{align}
    \mH(\bZ|\bY) &= \mmE_{\by_i \sim \mY} \mH(\bZ| \bY=\by_i) \\
    &\leq \mmE_{\by_i \sim \mY}[-\log(\epsilon)\text{Dim}_{\text{ID}} \mM_i + \log\frac{K}{C(\epsilon)}],
\end{align}
for some fixed scalar K. $\text{Dim}_{\text{ID}}\mM_i$ is the intrinsic dimension of the manifold $\mM_i$.
\end{thm}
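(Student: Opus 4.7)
The plan is to reduce the conditional entropy to a pointwise bound on each fibre $\mM_i$, exploit the maximum-entropy principle to replace the unknown fibre distribution by the uniform distribution $P'$, and then invoke Definition~\ref{deftn:ID} to relate the resulting volume term to $\epsilon$, $C(\epsilon)$ and $\text{Dim}_{\text{ID}}\mM_i$.

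First, by the definition of conditional differential entropy, $\mH(\bZ|\bY) = \mmE_{\by_i \sim \bY}\,\mH(\bZ|\bY = \by_i)$, so it is enough to bound $\mH(\bZ|\bY = \by_i)$ for each fixed $\by_i$. Among all absolutely continuous distributions supported on $\mM_i$, the uniform distribution $P'$ has the largest differential entropy; non-negativity of the KL divergence between the true conditional law and $P'$ then yields $\mH(\bZ|\bY = \by_i) \leq \mH(P') = \log \mathrm{Vol}(\mM_i)$, reducing the problem to expressing $\log \mathrm{Vol}(\mM_i)$ through $\epsilon$ and $C(\epsilon)$.

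The main technical step is to rewrite $\log \mathrm{Vol}(\mM_i)$ in the form stated by the theorem. Pick any reference point $\bz'$ on $\mM_i$. By Definition~\ref{deftn:ID}, as $\epsilon \to 0^+$ the neighbourhood $\{\bz \in \mM_i : \|\bz - \bz'\| \leq \epsilon\}$ is homeomorphic to an open subset of $\mmR^{\text{Dim}_{\text{ID}}\mM_i}$, so its intrinsic measure scales like $K_0\,\epsilon^{\text{Dim}_{\text{ID}}\mM_i}$ for some geometric constant $K_0$ depending on the local chart. Because $P'$ has constant density $1/\mathrm{Vol}(\mM_i)$, this gives the asymptotic $C(\epsilon) \sim K_0\,\epsilon^{\text{Dim}_{\text{ID}}\mM_i}/\mathrm{Vol}(\mM_i)$. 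Solving for $\mathrm{Vol}(\mM_i)$, taking logarithms, and absorbing $K_0$ together with any remaining geometric factors into the free scalar $K$ produces the required expression $-\log(\epsilon)\,\text{Dim}_{\text{ID}}\mM_i + \log(K/C(\epsilon))$ at each $\by_i$.

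Taking expectation over $\by_i \sim \bY$ of this pointwise bound then yields the stated inequality. The hardest part is making the ball-volume asymptotic uniform in $\by_i$ so that the expectation can be interchanged with the limit $\epsilon \to 0^+$: Definition~\ref{deftn:ID} only supplies a pointwise limiting statement, so for each $\by_i$ one must extract a non-asymptotic approximation whose residual and whose geometric constant $K_0$ can be dominated uniformly across the family $\{\mM_i\}$. This typically requires a mild regularity hypothesis on the fibres (for instance a uniform bound on the local Lipschitz constant of the chart, or uniform boundedness of $\mathrm{Vol}(\mM_i)$), after which dominated convergence transfers the fibrewise bound to the expectation and closes the argument.
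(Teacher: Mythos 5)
Your route is essentially the paper's: decompose $\mH(\bZ|\bY)$ fibrewise, invoke the fact that the uniform distribution maximizes differential entropy among distributions supported on $\mM_i$, and convert $\log \mathrm{Vol}(\mM_i)$ into the stated expression via the small-ball asymptotic --- that last conversion is exactly the step the paper delegates to a cited proposition of Ghosh et al.\ (stated as a lemma giving \emph{equality} in the uniform case), so you have in fact supplied more detail than the paper does. One algebraic wrinkle you should not gloss over: solving your own asymptotic $C(\epsilon)\sim K_0\,\epsilon^{d}/\mathrm{Vol}(\mM_i)$ for the volume gives $\log \mathrm{Vol}(\mM_i)= d\log(\epsilon)+\log(K_0/C(\epsilon))$, i.e.\ with a \emph{plus} sign on $\log(\epsilon)$, whereas you claim it ``produces'' $-\log(\epsilon)\,d+\log(K/C(\epsilon))$; since $\log(\epsilon)<0$ for small $\epsilon$, the minus-sign expression is strictly larger, so the stated inequality survives, but your derivation as written does not literally yield the theorem's right-hand side and you should reconcile the sign explicitly. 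Your closing remarks about making the ball-volume asymptotic uniform in $\by_i$ before exchanging the limit with the expectation identify a real regularity issue that the paper passes over in silence.
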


Theorem \ref{thm:entropyToID} is derived from [\cite{ghosh2023local}, Proposition 1]. The detailed proof is provided in Appendix \ref{appendix:thm:entropyToID}. Theorem \ref{thm:entropyToID} states that the conditional entropy $\mH(\bZ|\bY)$ is bounded by the IDs of manifolds corresponding to the distribution $(\bz|\by = \by_i)$, and the bound is tight when $(\bz|\by = \by_i)$ are uniformly distributed across the manifolds.

Since $\mM_i \subset \mM$, Theorem \ref{thm:entropyToID} suggests that reducing the intrinsic dimension of the feature space $\mM$ will lead to a lower $\mH(\bZ|\bY)$, 
which in turn implies a better generalization performance based on 
% which implies improved generalization performance according to
Theorem \ref{thm:bound}. On the other hand, the ID of $\mM$ should not be less than the intrinsic dimension of the target space to guarantee sufficient representation capabilities. Thus, a $\mM$ with an intrinsic dimension equal to the dimensionality of the target space is desirable.

\subsection{Enforcing Topological Similarity}
\label{subsection:Connection2}
Below, we establish the second connection: topological similarity between the feature and target spaces. We first define the optimal representation following \citet{achille2018information}.

\begin{deftn}
\label{def:optimalRepresentation}
(Optimal Representation). The representation $\bZ$ is optimal if (1) $\mH(\bY|\bZ) = \mH(\bY|\bX)$ and (2) $\bZ$ is fully determined given $\bY$, \ie $\mH(\bZ|\bY)$ is minimal.
\end{deftn}

In Definition \ref{def:optimalRepresentation}, $\mH(\bY|\bZ) = \mH(\bY|\bX)$ means $\bZ$ is sufficient for the target $\bY$, while $\mH(\bZ|\bY)$ is minimal means $\bZ$ discards all information that is not relevant to $\bY$, and $\bZ$ is fully determined given $\bY$. 
For continuous entropy, a minimal $\mH(\bZ|\bY)$ implies that $\mH(\bZ|\bY) = - \infty$, as $\bZ$ is distributed as a delta function once given $\bY$. In the discrete case, $\mH(\bZ|\bY) = 0$.

\begin{prop}
\label{prop:optimal}
Let the target $\bY =\bY' + \bN'$  
where $\bY'$ is fully determined by $\bX$ and $\bN'$ is 
the aleatoric uncertainty that is independent of $\bX$. Assume the underlying mapping $f'$ from $\bZ$ to $\bY'$ and $g'$ from $\bY'$ to $\bZ$ are continuous, where the continuous mapping is based on the topology induced by the Euclidean distance. Then the representation $\bZ$ is optimal if and only if $\bZ$ is homeomorphic to $\bY'$. 
\end{prop}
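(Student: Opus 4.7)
The plan is to translate each optimality condition into a deterministic functional relationship between $\bZ$ and $\bY'$, and then invoke the continuity hypothesis to promote the resulting pair of mutual inverses to a homeomorphism; the converse will follow by direct calculation. Throughout, the workhorse will be the noise-independence structure: since $\bN' \perp \bX$ and both $\bY'$ and $\bZ$ are (measurable) functions of $\bX$, we have $\bN' \perp (\bY',\bZ)$, yielding
\[
\mH(\bY\mid\bX)=\mH(\bN'),\qquad \mI(\bY;\bX)=\mH(\bY'),\qquad \mI(\bY;\bZ)=\mI(\bY';\bZ).
\]

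For the ``$\Rightarrow$'' direction, I would rewrite condition~(1) as $\mI(\bY;\bZ)=\mI(\bY;\bX)$; the identities above turn this into $\mI(\bY';\bZ)=\mH(\bY')$, so $\mH(\bY'\mid\bZ)$ is at its minimum and $\bY'$ is an almost-sure function of $\bZ$---by hypothesis, the continuous $f'$. Under this sufficiency a direct computation gives $\mH(\bZ\mid\bY)=\mH(\bZ)-\mH(\bY')$, so condition~(2) (minimality of $\mH(\bZ\mid\bY)$) becomes $\mH(\bZ)=\mH(\bY')$, forcing $\bZ$ to be an almost-sure function of $\bY'$---by hypothesis, the continuous $g'$. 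A short composition argument shows $f'\circ g'=\mathrm{id}$ and $g'\circ f'=\mathrm{id}$ on the respective supports, so $f'$ is a continuous bijection with continuous inverse $g'$, i.e.\ a homeomorphism.

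For the ``$\Leftarrow$'' direction, assume $\bZ\cong\bY'$ via continuous $f',g'$. Because $\bY'=f'(\bZ)$ is determined by $\bZ$ and $\bN'\perp\bZ$,
\[
\mH(\bY\mid\bZ)=\mH(\bY'+\bN'\mid\bZ)=\mH(\bN'\mid\bZ)=\mH(\bN')=\mH(\bY\mid\bX),
\]
giving~(1). Because $\bZ=g'(\bY')$, we have $\mH(\bZ)=\mH(\bY')$ and hence $\mH(\bZ\mid\bY)=\mH(\bZ)-\mH(\bY')$ attains its minimum, giving~(2).

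The hard part is interpreting ``$\mH(\bZ\mid\bY)$ is minimal'' rigorously once the noise $\bN'$ is nontrivial, since $\bZ$ cannot be a literal deterministic function of $\bY$ in that case. The correct reading, which the argument above makes precise, is that ``minimal'' must be taken subject to the sufficiency constraint of~(1); under that constraint $\mH(\bZ\mid\bY)=\mH(\bZ)-\mH(\bY')$ is minimized exactly when $\bZ$ is a function of $\bY'$. A secondary subtlety is converting the almost-sure functional relations coming out of the entropy identities into honest pointwise continuous maps on the supports; the proposition's continuity hypothesis is used precisely to finesse this, but one must still argue that the hypothesized $f'$ and $g'$ coincide with those produced by the entropy equalities and are mutual inverses on the full supports.
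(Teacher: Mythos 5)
Your proof follows the same route as the paper's: condition (1) of optimality forces $\bY'$ to be deterministically recoverable from $\bZ$ (so the hypothesized continuous $f'$ is single-valued), condition (2) forces $\bZ$ to be deterministically recoverable from $\bY'$ (so $g'$ is single-valued and inverse to $f'$), and the continuity hypothesis upgrades the resulting bijection to a homeomorphism; the converse is a direct entropy computation. The paper executes this by decomposing the conditional entropies directly, $\mH(\bY|\bZ)=\mH(\bY'|\bZ)+\mH(\bN'|\bY')$ and $\mH(\bZ|\bY)=\mH(\bZ|\bY')$, whereas you route everything through mutual-information identities.

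That routing introduces a genuine gap: the identities $\mI(\bY;\bX)=\mH(\bY')$ and $\mI(\bY;\bZ)=\mI(\bY';\bZ)$ are false in general. Indeed $\mI(\bY;\bX)=\mH(\bY'+\bN')-\mH(\bN')=\mI(\bY;\bY')$, and $\mH(\bY'+\bN')\neq\mH(\bY')+\mH(\bN')$ for independent summands (take $\bY',\bN'$ i.i.d.\ uniform on $\{0,1\}$: $1.5$ bits versus $2$ bits). Likewise $\bZ\to\bY'\to\bY$ is a Markov chain, so data processing gives only $\mI(\bY;\bZ)\leq\mI(\bY';\bZ)$, not equality. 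With the corrected identities, condition (1) yields only $\mI(\bY;\bY'\,|\,\bZ)=0$, which does not force $\bY'$ to be a function of $\bZ$ (e.g.\ $\bY'$ and $\bN'$ independent uniform on $\mmZ_2$ with mod-$2$ addition and $\bZ$ constant satisfies $\mH(\bY|\bZ)=\mH(\bY|\bX)$ while $\bZ$ carries no information). Your derivation of sufficiency therefore needs the extra hypothesis that the noise destroys no information about the clean target, i.e.\ $\mH(\bY'|\bY)=0$. To be fair, the paper's own decomposition $\mH(\bY|\bZ)=\mH(\bY'|\bZ)+\mH(\bN'|\bZ,\bY')$ is itself only valid when $\bY'$ is recoverable from $(\bY,\bZ)$ --- a strictly weaker implicit assumption than the one your identities require --- and both arguments share the further continuous-case looseness that $\mH(\bZ)=\mH(\bY')$ fails for a general homeomorphism because of the Jacobian term in differential entropy. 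So your overall structure matches the paper, but the specific identities you chose as the load-bearing steps are precisely the ones that break.
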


The detailed proof of Proposition \ref{prop:optimal} is provided in Appendix \ref{appendix:prop:optimal}. 
Proposition \ref{prop:optimal} demonstrates that the optimal $\bZ$ is homeomorphic to $\bY'$, implying the need to learn a $\bZ$ that is homeomorphic to $\bY'$.
%hl{encouraging $\bZ$ and $\bY'$ to be homeomorphic}.
However, directly enforcing homeomorphism 
% achieving this 
can be challenging to achieve since $\bY'$ is generally unknown, and the representation $\bZ$ typically lies in a high-dimensional space that cannot be modeled without sufficient data samples. 
As such, we opted to enforce the topological similarity between the target and feature spaces, preserving topological features similar to homomorphism. 
Here, topological similarity refers to the similarity in topological features, such as clusters and loops, and their localization \cite{trofimov2023learning}.
The two established connections imply that the desired $\bZ$ should be topologically similar to the target space and share the same ID as the target space. More illustrations are given in Appendix \ref{appendix:discussionProposition2} and \ref{appendix:discussionRegressor}.

\section{PH-Reg for Regression}

Our analysis in Sec.~\ref{Sec:thm} inspires us to learn a feature space that is (1) topologically similar to the target space and (2) with an intrinsic dimension (ID) equal to that of the target space. To this end, we propose a regularizer named 
%~\AY{give full name first} 
Persistent Homology Regression Regularizer (PH-Reg).  PH-Reg features two terms: an intrinsic dimension term $\mL_d$ and a topology term $\mL_t$.
$\mL_d$ follows Birdal's regularizer \citep{birdal2021intrinsic} to control the ID of feature space. Additionally, it considers the target space to ensure sufficient representation capabilities. $\mL_d$ exploit the topology autoencoder \citep{moor2020topological} to encourage the topological similarity. Note the two regularizer terms are mainly introduced to verify our connections, and other ID and topology regularizers can also be considered. 
However, empirical observations suggest that our $\mL_d$ and $\mL_t$ effectively align with our established connections, perform well, and do not conflict with each other.

We first introduce some notations. Let $\bZ_{n}$ represent the set of $n$ samples from $\bZ$, and $\bY_n$ be the labels corresponding to $\bZ_n$. We denote $\text{PH}_{0}(\text{VR}(\bZ_n))$ the $0$-dimensional persistent homology. Intuitively, $\text{PH}_{0}(\text{VR}(\bZ_n))$ can be regarded as a set of edge lengths, where the edges are derived from the minimum spanning tree obtained from the distance matrix $\bA^{\bZ_n}$ of $\bZ_n$. 
We denote $\pi^{\bZ_n}, \pi^{\bY_n}$ the set of the index of edges in the minimum spanning trees of $\bZ_n$ and $\bY_n$, respectively, and $\bA^{*}[\pi^{*}]$ the corresponding length of the edges. Let $E(\bZ_{n}) =  \sum_{\gamma \in \text{PH}_0(\text{VR}(\bZ_{n}))} |I(\gamma)|$ be the sum of edge lengths of the minimum spanning trees corresponding to $\bZ_{n}$. We define $E(\bY_{n})$ similarly. Some topology preliminaries are given in Appendix \ref{appendix:Preliminaries}.

\begin{figure*}[!t]
\centering
\subfigure[Regression] {
 \label{fig_reg2}
\includegraphics[width=0.36\columnwidth,height=0.3\columnwidth]{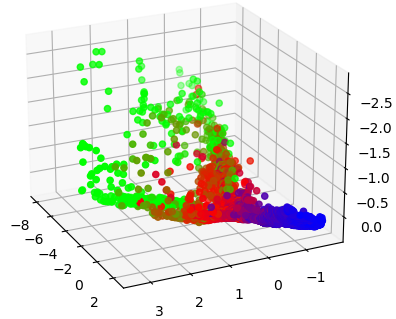}
}
\subfigure[Regression $+\mL'_d$] {
 \label{fig_l'd}
\includegraphics[width=0.36\columnwidth,height=0.3\columnwidth]{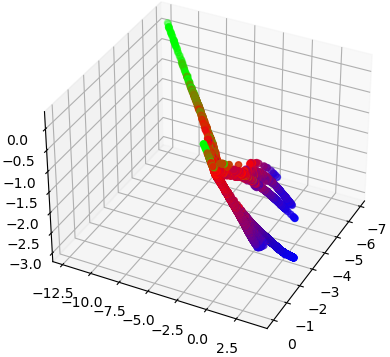}
}
\subfigure[Regression $+\mL_d$] {
 \label{fig_ld}
\includegraphics[width=0.36\columnwidth,height=0.3\columnwidth]{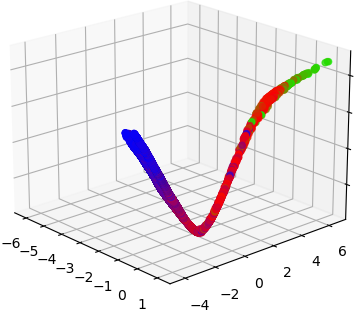}
}
\subfigure[Regression $+\mL_t$] {
 \label{fig_lt}
\includegraphics[width=0.36\columnwidth,height=0.3\columnwidth]{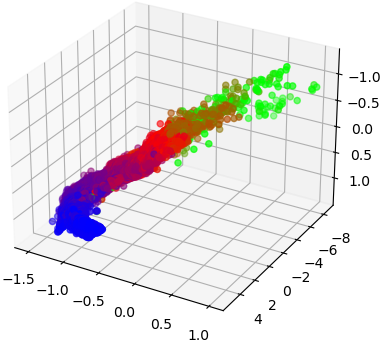}
}
\subfigure[
%~\hl{XX}$
Regression $+ \mL_R$] {
 \label{fig_lr}
\includegraphics[width=0.38\columnwidth,height=0.3\columnwidth]{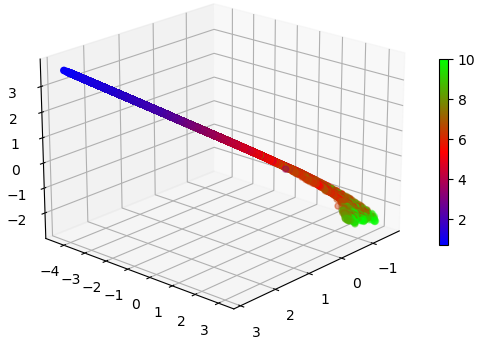}
}
\caption{
Visualization of the last hidden layer's feature space from the depth estimation task. The representations are obtained through a modified ResNet-50, with the last hidden layer changed to dimension 3 for visualization. The target space is a $1$-dimensional line, and colors represent the ground truth depth. (b) $\mL'_d$  encourages a lower intrinsic dimension yet fails to preserve the topology of the target space. (c) $\mL_d$ takes the target space into consideration and can further preserve its topology. (d) $\mL_t$ can enforce the topological similarity between the feature and target spaces. (e) Adding the $\mL_t$ to $\mL_d$ better preserves the topology of the target space.
}
\label{fig_vis}
\end{figure*}

\citet{birdal2021intrinsic} suggests to estimate the intrinsic dimension as the slope between $\log E (\bZ_{n})$ and $ \log n$.
%, where $\bZ_{n}$ is the set of $n$ samples from $\bZ$ . 
Note, the definition of intrinsic dimension used in \citet{birdal2021intrinsic}
%\hl{is different} 
is based on the $0$-dimensional persistent homology $\text{PH}_{0}(\text{VR}(\bZ_n))$, which is different from ours (Definition \ref{deftn:ID}, coming from~\citet{ghosh2023local}). % (also ours).
% \hl{However, they are both defined for the intrinsic dimension}, 
However, both definitions define the same object, \ie the intrinsic dimension, and it is thus reasonable to exploit \citet{birdal2021intrinsic}'s method to constrain the intrinsic dimension.

Let $\be' = [\log E(\bZ_{n_1}), \log E(\bZ_{n_2}), \cdots, \log E(\bZ_{n_m})]$ , where $\bZ_{n_i}$ is the subset sampled from a batch, with  size $n_i = |\bZ_{n_i}|$. Let $n_i < n_j$ for $i<j$, and $\bn = [\log n_1, \log n_2, \cdots, \log n_m]$. \citet{birdal2021intrinsic} encourage a lower intrinsic dimension feature space by minimizing the slope between $\be'$ and $\bn$, which can be estimated via the least square method:
\begin{equation}
\label{equ_l_dPrime}
    \mL'_d = (m\sum_{i=1}^{m} \bn_i \be'_i - \sum_{i=1}^{m} \bn_i \sum_{i=1}^{m} \be'_i) / (m \sum_{i=1}^{m} \bn_i^2 - (\sum_{i=1}^{m} \bn_i)^2).
\end{equation}
Intuitively, the growth rate of $E(\bZ_{n})$ is proportional to the volume of the corresponding manifold; this volume is proportional to the intrinsic dimension. In fact, there is a classical result on the growth rate of \cite{steele1988growth}, showing that the growth rate (i.e. the slop) can constrain the intrinsic dimension. 

$\mL'_d$ purely encourage the feature space to have a lower intrinsic dimension; sometimes it may even result in an intrinsic dimension lower than that of the target space (see Figure \ref{fig_syn}, Swiss Roll, where the target space is two-dimensional and the feature space is almost one-dimensional.).
In contrast, we wish to lower the ID of the feature space while preventing it from being lower than that of the target space.
We propose to minimize slope between $\be = [\be_1, \be_2, \cdots, \be_m]$ and $\bn$:
\begin{equation}
\label{equ_l_d}
    \mL_d = |(m\sum_{i=1}^{m} \bn_i \be_i - \sum_{i=1}^{m} \bn_i \sum_{i=1}^{m} \be_i) / (m \sum_{i=1}^{m} \bn_i^2 - (\sum_{i=1}^{m} \bn_i)^2)|,
\end{equation}
where $\be_i = \log E(\bZ_{n_i})/\log E(\bY_{n_i})$. 
%By contrast $\be'_i = \log E(\bZ_{n_i})$ in $\mL'_d$.
Compared with $\mL'_d$, $\mL_d$ further exploits the topological information of the target space through $\log E(\bY_{n_i})$. When the feature and target spaces have the same ID, $E(\bZ_{n_i}) = E(\bY_{n_i})$ for all $i$ and $\mL_d=0$ is in its minimal.
As shown in Figure \ref{fig_ld} and Figure \ref{fig_syn}, $\mL_d$ well controls the ID of the feature space while better preserving the topology of the target space.

The topology autoencoder \cite{moor2020topological} enforces the topological similarity between the feature and the target spaces by preserving $0$-dimensional topologically relevant distances from the two spaces. We exploit it as the topology part $\mL_t$:
\begin{align}
\label{equ_l_t}
    \mL_{t}  = &||\bA^{\bZ_{n_m}}[\pi^{\bZ_{n_m}}] - \bA^{\bY_{n_m}}[\pi^{\bZ_{n_m}}]||_2^2 \\
    & + ||\bA^{\bZ_{n_m}}[\pi^{\bY_{n_m}}] - \bA^{\bY_{n_m}}[\pi^{\bY_{n_m}}]||_2^2
    % \mL_{T} = (\sum \bA^{\bZ_n}[\pi^{\bY_n}] - \sum \bA^{\bZ_n}[\pi^{\bZ_n}]) / \sum \bA^{\bZ_n}[\pi^{\bY_n}]
\end{align}
As shown in Figure \ref{fig_lt} and Figure \ref{fig_syn},
%\hl{$\mL_t$ can preserve the topology of the target space, yet it fails to encourage a lower intrinsic dimension.} 
$\mL_t$ well preserves the topology of the target space.
We define the persistent homology regression regularizer, PH-Reg, as $\mL_{R} = \mL_d + \mL_t$.
%, with an intrinsic dimension term $\mL_D$ and a topology term $\mL_T$. 
As shown in Figure \ref{fig_lr} and Figure \ref{fig_syn}, PH-Reg can both encourage a lower intrinsic dimension and preserve the topology of target space.
%We show our regression with PH-Reg (red dotted arrow) in Fig.~\ref{fig:framework}.
The final loss function $\mL_{R}$ is defined as:
\begin{equation}
    \mL_{R} = \mL_m + \lambda_t \mL_t + \lambda_d \mL_d,
\end{equation}
where $\mL_m$ is the task-specific regression loss and $\lambda_d, \lambda_t$ are trade-off parameters, and their values are determined by the value of the task task-specific loss $\mL_m$, \eg for a high $\mL_m$, $\lambda_d$ and $\lambda_t$ should also be set to high values.

\section{Experiments}
\label{sec:experiment}

We compare our method with four methods. 1) Information Dropout (InfDrop) \cite{achille2018information}. InfDrop serves as an IB baseline.  It functions as a regularizer designed based on IB, aiming to learn representations that are minimal, sufficient, and disentangled. 2) Ordinal Entropy (OE) \cite{zhang2023improving}. OE acts as a regression baseline. It takes advantage of classification by learning higher entropy feature space for regression tasks. 3) Birdal's regularizer (\ie, $\mL_d'$) \cite{birdal2021intrinsic}  serves as an intrinsic dimension baseline. 4) Topology Autoencoder (\ie, $\mL_t$) \cite{moor2020topological} serves as a topology baseline. Note that the proposed PH-Reg is mainly introduced to verify the established connections, and we do not aim for state-of-the-art results.

\subsection{Coordinate Prediction on the Synthetic Dataset}
\label{subsection:synthetic}
To verify the topological relationship between the feature space and target space, we synthesize a dataset that contains points sampled from topologically different objects, including Swiss roll, torus, circle and the more complex object ``mammoth'' \citep{Coenen19}. We randomly sample $3000$ points with coordinate $\by \in \mmR^3$ from each object. These $3000$ points are then divided into sets of $100$ for training, $100$ for validation, and $2800$ for testing. Each point $\by_i$ is encoded into a $100$ dimensional vector $\bx_i = [f_1(\by_i), f_2(\by_i), f_3(\by_i), f_4(\by_i), \text{noise}]$, where the dimensions $1$-$4$ are signal and the rest $96$ dimensions are noise. The coordinate prediction task aims to learn the mapping $G(\bx) = \hat{\by}$ from $\bx$ to $\by$, and the mean-squared error $\mL_{\text{mse}}=\frac{1}{N}\sum_i||\hat{\by_i}-\by_i||_2^2$ is adopted as the evaluation metric. We use a two-layer fully connected neural network with 100 hidden units as the baseline architecture. More details are given in Appendix \ref{appendix:syntheticDataset}.

\begin{table}[t!]
	\caption{Results ($\mL_{\text{mse}}$) on the synthetic dataset. We report results as mean $\pm$ standard variance over $10$ runs. \textbf{Bold} numbers indicate the best performance.}
	\label{tab:synthetic}
	\centering
		\scalebox{0.8}{\begin{tabular}{c|cccc}
			\hline
			\multirow{1}[0]{*}{Method}
			& Swiss Roll & Mammoth & Torus & Circle  \\
			\hline
% 			Eigen et al. \citep{eigen2014depth} & 0.769 & 0.158 & 0.641 & -  \\
			Baseline  & 2.99 $\pm$ 0.43 & 211 $\pm$ 55 & 3.01 $\pm$ 0.11 & 0.154 $\pm$ 0.006  \\
			$+$ InfDrop  & 4.15 $\pm$ 0.37 & 367 $\pm$ 50 & 2.05 $\pm$ 0.04& 0.093 $\pm$ 0.003  \\
			$+$ OE  & 2.95 $\pm$ 0.69 & 187 $\pm$ 88 & 2.83 $\pm$ 0.07 & 0.114 $\pm$ 0.007  \\
      \hline
			 $+ \mL'_d$ & 2.74 $\pm$ 0.85 & 141 $\pm$ 104 & 1.13 $\pm$ 0.06 & 0.171 $\pm$ 0.04 \\
			 $+ \mL_d$ & 0.66 $\pm$ 0.08   & 89 $\pm$ 66 & 0.62 $\pm$ 0.12& 0.090 $\pm$ 0.019  \\
			 $+ \mL_t$ & 1.83 $\pm$ 0.70 & 80 $\pm$ 61& 0.95 $\pm$ 0.05& 0.036 $\pm$ 0.004  \\
			 $+ \mL_d+ \mL_t$ & \textbf{0.61} $\pm$ \textbf{0.17} & \textbf{49} $\pm$ \textbf{27} & \textbf{0.61} $\pm$ \textbf{0.05}& \textbf{0.013} $\pm$ \textbf{0.008}   \\
			\hline
		\end{tabular}}
\end{table}
Table \ref{tab:synthetic} shows that encouraging a lower intrinsic dimension while considering the target space ($+\mL_d$) enhances performance, particularly for Swiss Roll and Torus. In contrast, naively lowering the intrinsic dimension ($+\mL'_d$) performs poorly.
%and even worse than the baseline, \ie Torus. 
Enforcing the topology similarity between the feature space and target space ($+\mL'_t$) decreases the $\mL_{\text{mse}}$ by more than $60\%$, except for the Swiss roll. The best gains, however, are achieved by incorporating both $\mL_t$ and $\mL_d$, which decrease the $\mL_{\text{mse}}$ even more than $90\%$ for the circle coordinate prediction task. Figure \ref{fig_syn} shows feature space visualization results using t-SNE ($100$ dimensions $\rightarrow 3$ dimensions). The feature space of the regression baseline shows a similar structure to the target space, especially for Swiss roll and mammoth, which indicates regression potentially captures the topology of the target space. Regression $+ \mL_t$  significantly preserves the topology of the target space. Regression $+ \mL_d$ potentially preserves the topology of the target space, \eg circle, while it primarily reduces the complexity of the feature space by maintaining the same intrinsic dimension as the target space. 
%Regression $+ \mL_d + \mL_t$ preserves the topology information and also simplifies the feature space, \ie reduces its complexity.
Combining both $\mL_d$ and $\mL_t$ in regression preserves the topology information while also reducing the complexity of the feature space, \ie lower its intrinsic dimension.

\begin{figure*}[!t]
\centering
\includegraphics[width=\linewidth]{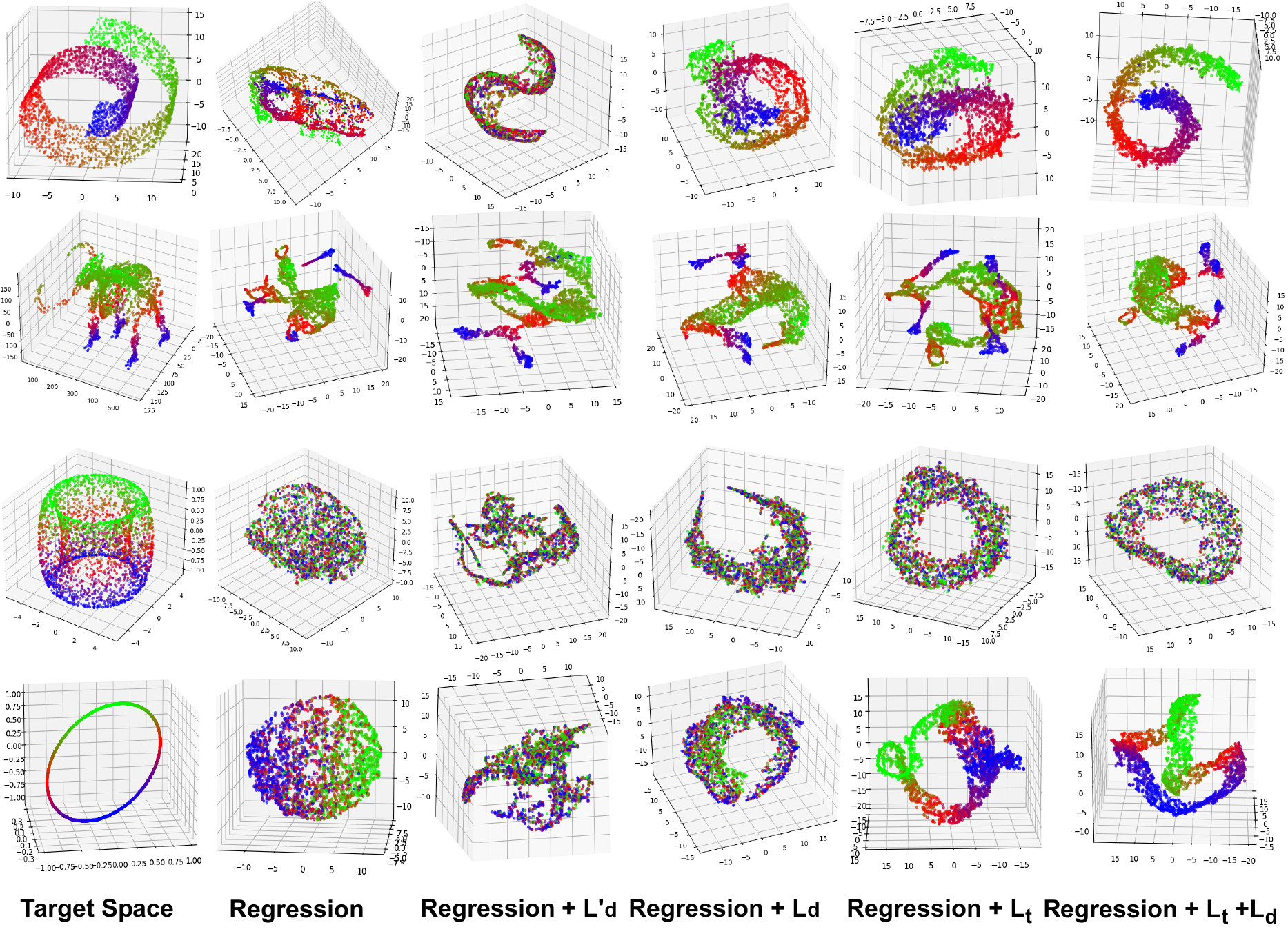}
\caption{t-sne visualization of the feature spaces ($100$ dimensions $\rightarrow 3$ dimensions) with topological different target spaces.}
\label{fig_syn}
\end{figure*}

\subsection{Real-World Regression Tasks}

% We conduct experiments on three real-world regression tasks, including depth estimation (Table \ref{tab:age}), super-resolution (Table \ref{tab:superresolution}) and age estimation (Table \ref{tab:nyu}). The target spaces of the three tasks are topologically different, \ie a $1$- dimensional line for depth estimation, $3$-dimensional space for super-resolution and discrete points for age estimation.  Results on the three tasks demonstrate that both $\mL_t$ and $\mL_d$ can enhance performance, and combining both further boosts the performance. 
% Detailed settings and more discussions are given in Appendix \ref{appendix:realworldTasks}.

We conduct experiments on three real-world regression tasks, including depth estimation (Table \ref{tab:age}), super-resolution (Table \ref{tab:superresolution}) and age estimation (Table \ref{tab:nyu}). The target spaces of the three tasks are topologically different, \ie a $1$- dimensional line for depth estimation, $3$-dimensional space for super-resolution and discrete points for age estimation.  
Detailed settings, related introductions and more discussions are given in Appendix \ref{appendix:realworldTasks}.

Results on the three tasks demonstrate that both $\mL_t$ and $\mL_d$ can enhance performance, and combining both further boosts the performance. Specifically, combining both achieves $0.48$ overall improvements (\ie ALL) on age estimation, a PSNR improvement of $0.096$ on super-resolution for Urban100, and a reduction of $6.7\%$ $\delta_1$ error on depth estimation.

\begin{table}[t!]
	\caption{Quantitative comparison (MAE) on AgeDB. We report results as mean $\pm$ standard variance over $3$ runs.
	%Baseline and Baseline$^*$ denote the results from the original paper and our re-trained model. 
% 	\AY{\hl{ALL, Many, etc. refer to XX}}
	\textbf{Bold} numbers indicate the best performance.
	}
 	\label{tab:age}
	\centering
		\scalebox{0.8}{\begin{tabular}{c|ccccc}
			\hline
			\multirow{1}[0]{*}{Method}
			& ALL & Many & Med. & Few  \\
			\hline
			Baseline & 7.80 $\pm$ 0.12 & 6.80 $\pm$ 0.06 & 9.11	$\pm$ 0.31 & 13.63	$\pm$ 0.43  \\
			 $+$ InfDrop & 8.04	$\pm$ 0.14 & 7.14	$\pm$ 0.20 & 9.10	$\pm$ 0.71 & 13.61	$\pm$ 0.32 \\
			 $+$ OE & 7.65	$\pm$ 0.13 & 6.72	$\pm$ 0.09 & 8.77	$\pm$ 0.49 & 13.28	$\pm$ 0.73  \\
% 			Baseline$^*$ & 7.73 & 6.74 & 9.05 & 13.35 & 5.05 & 4.38 & 6.44 & 9.77 \\
   \hline
			$+ \mL'_d$ & 7.75	$\pm$ 0.05 & 6.80	$\pm$ 0.11 & 8.87	$\pm$ 0.05 & 13.61	$\pm$ 0.50 \\
			$+ \mL_d$ & 7.64 $\pm$ 0.07 & 6.82	$\pm$ 0.07 & 8.62	$\pm$ 0.20 & 12.79	$\pm$ 0.65   \\
			$+ \mL_t$ & 7.50	$\pm$ 0.04 & 6.59	$\pm$ 0.03 & 8.75	$\pm$ 0.03 & 12.67	$\pm$ 0.24 \\
			$+ \mL_d + \mL_t$ &  \textbf{7.32}	$\pm$ \textbf{0.09} & \textbf{6.50}	$\pm$ \textbf{0.15} & \textbf{8.38}	$\pm$ \textbf{0.11} & \textbf{12.18}	$\pm$ \textbf{0.38}  \\
   \hline
		\end{tabular}}
\end{table}

\begin{table}[t]
	\caption{Quantitative comparison (PSNR(dB)) of super-resolution results with public benchmark and DIV2K validation set. \textbf{Bold} numbers indicate the best performance.}
	\label{tab:superresolution}
	\centering
		\scalebox{0.8}{\begin{tabular}{c|ccccc}
			\hline
			\multirow{1}[0]{*}{Method}
			& Set5 & Set14 & B100 & Urban100 & DIV2K \\
			\hline
% 			Eigen et al. \citep{eigen2014depth} & 0.769 & 0.158 & 0.641 & -  \\
			Baseline & 32.241 & 28.614 & 27.598 & 26.083 & 28.997  \\
			$+$ InfDrop & 32.219 & 28.626 & 27.594 & 26.059 & 28.980  \\
			$+$ OE & 32.280 & 28.659 & 27.614 & 26.117 & 29.005  \\
      \hline
			 $+ \mL'_d$ & 32.252 & 28.625 & 27.599 & 26.078 & 28.989  \\
			 $+ \mL_d$ & 32.293 & 28.644 & 27.619 & 26.151 & 29.022  \\
			 $+ \mL_t$ & \bf{32.322} & 28.673 & 27.624 & 26.169 & 29.031  \\
			 $+ \mL_d+ \mL_t$ & 32.288 & \bf{28.686} & \bf{27.627} & \bf{26.179} & \bf{29.038}  \\
			\hline
		\end{tabular}}
\end{table}

\begin{table}[t]
	\caption{Depth estimation results with NYU-Depth-v2. \textbf{Bold} and \underline{underline} numbers indicate the best and second best performance, respectively.}
	\label{tab:nyu}
	\centering
		\scalebox{0.8}{\begin{tabular}{l|cccccc}
			\hline
			\multirow{1}[0]{*}{Method}
			& $\delta_1$~$\uparrow$& $\delta_2$~$\uparrow$& $\delta_3$~$\uparrow$ & REL~$\downarrow$ & RMS~$\downarrow$ & $\log_{10}$~$\downarrow$ \\
			\hline
			Baseline & 0.792& 0.955& 0.990  & 0.153 & 0.512 & 0.064\\
			$+$ InfDrop & 0.791& 0.960& 0.992  & 0.153 & 0.507 & 0.064\\   
			$+$ OE & \textbf{0.811}& - & -  & \textbf{0.143} & \textbf{0.478} & \textbf{0.060}\\    
   \hline
			  $+\mL'_d$ & 0.804& 0.954& 0.988 & 0.151 & 0.502 & 0.063\\
			  $+\mL_d$ & 0.795& \textbf{0.959}& \textbf{0.992} & 0.150 & 0.497 & 0.063\\
			  $+\mL_t$ & 0.798& 0.958& 0.990 & 0.149 & 0.502 & 0.063\\
			  $+\mL_d + \mL_t$ & \underline{0.807}& \textbf{0.959}& \textbf{0.992} & \underline{0.144} & \underline{0.481} & \underline{0.061}\\
			\hline
		\end{tabular}}
\end{table}

\subsection{Ablation Studies}

\begin{figure*}[!t]
	\centering
	\subfigure[MSE with $\lambda_t$]{	
	\label{fig_lamda_t_MSE}
	\includegraphics[width=0.241\linewidth,height=0.2\linewidth]{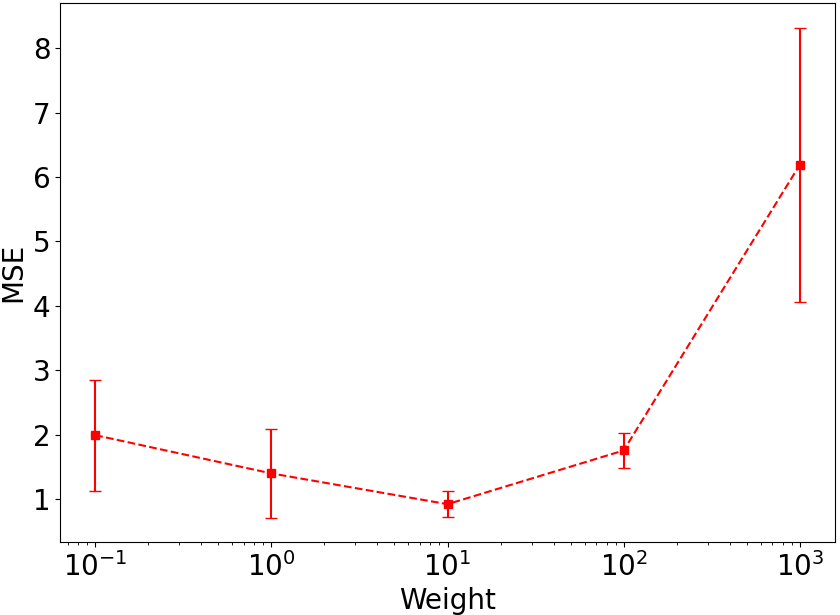}}
	\subfigure[MSE with $\lambda_d$]{	
	\label{fig_lamda_d_MSE}
	\includegraphics[width=0.241\linewidth,height=0.2\linewidth]{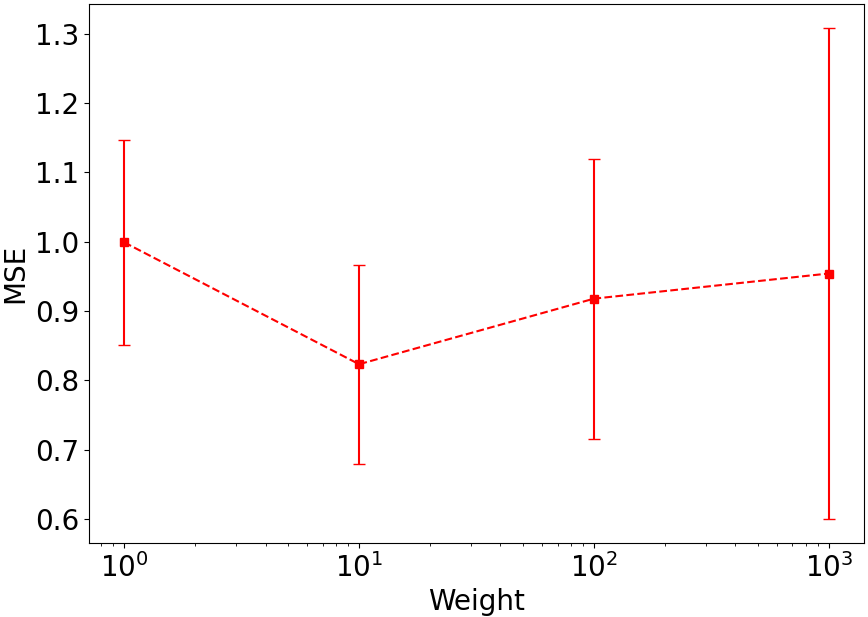}}
	\subfigure[MSE with sample size]{	
	\label{fig_sample_MSE}
	\includegraphics[width=0.241\linewidth,height=0.2\linewidth]{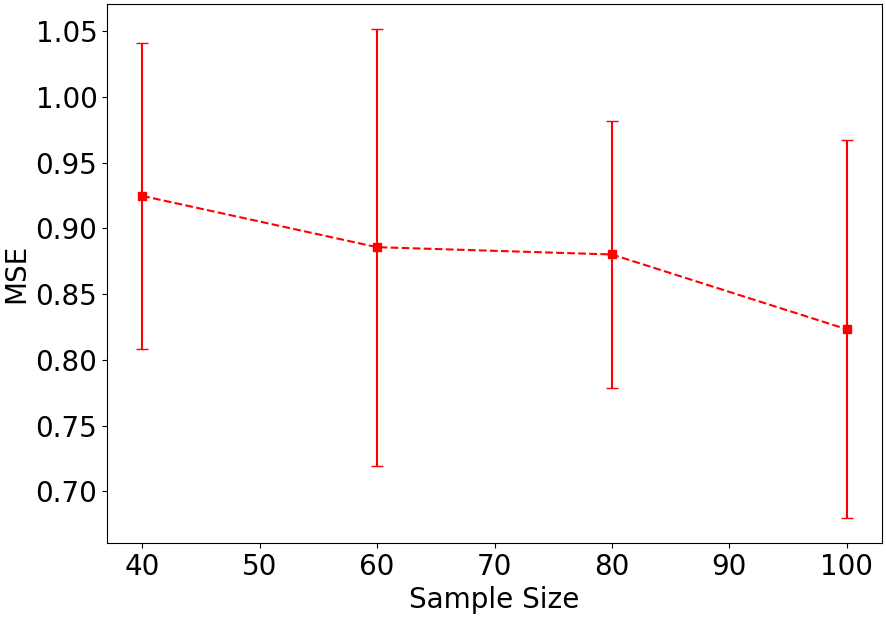}}
         \subfigure[ID of different methods]{
 	\label{fig_ablation_ID}
	\includegraphics[width=0.241\linewidth,height=0.2\linewidth]{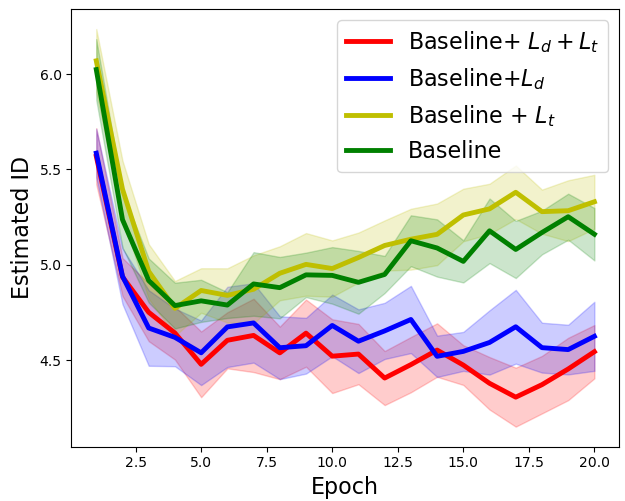}}
	\caption{Ablation study based on (a-c) the Swiss roll coordinate prediction task and (d) the depth estimation task. }
	\label{fig:ablation}
\end{figure*}

\textbf{Hyperparameter $\lambda_t$ and $\lambda_d$}: We maintain $\lambda_d$ and $\lambda_t$ at their default value $10$ for Swiss roll coordinate prediction, and we vary one of them to examine their impact. Figure \ref{fig_lamda_t_MSE} shows when $\lambda_t \leq 10$, the MSE decreases consistently as $\lambda_t$ increases. However, it tends to overtake the original learning objective when set too high, \ie $1000$. Regarding the  $\lambda_d$, as shown in Figure \ref{fig_lamda_d_MSE}, MSE remains relatively stable over a large range of $\lambda_d$, with a slight increase in variance when $\lambda_d=1000$.

\textbf{Sample Size ($n_m$)}: 
In practice, we model the feature space using a limited number of samples within a batch. 
%For dense prediction tasks like depth prediction, the available number of samples is more than required 
For dense prediction tasks, 
the available No. of samples is very large
(No. pixels per image $\times$ batch size), 
while it is constrained to the batch size for image-wise prediction tasks.
%like age estimation, the number of available samples is constrained to the batch size. 
We investigate the influence of $n_m$ from Eq. \ref{equ_l_d} and \ref{equ_l_t} on Swiss roll coordinate prediction. 
%We set the minimum sample size ($n_1$) to $20$, and vary the maximum sample size $n_m$ from $40$ to $100$. 
Figure \ref{fig_sample_MSE} shows our PH-Reg performs better with a larger $n_m$, while maintaining stability even with a small $n_m$.

\textbf{ID of different methods}: Figure \ref{fig_ablation_ID} displays the intrinsic dimension of the last hidden layer, estimated using TwoNN \citep{facco2017estimating}, for the testing set of NYU-Depth-v2 from different methods throughout training. While our method is based on Birdal's estimator \citep{birdal2021intrinsic}, another estimator, TwoNN, captures a decrease in ID when applied $\mL_d$. 
We observe that without $\mL_d$, the intrinsic dimension tends to increase after epoch $3$, potentially overfitting details, whereas $\mL_d$ prevents such a trend. 

\textbf{Efficiency}: 
Efficiency-wise, the computing complexity equals finding the minimum spanning tree from the distance matrix of the samples, which have a complexity of $\mO(n_m^2 \log n_m)$ using the simple Kruskal's Algorithm, and it can speed up with some advanced methods \citep{bauer2021ripser}.  
The synthetic experiments (Table \ref{tab:efficiency}) use a simple 2-layer MLP, so the regularizer adds significant computing time. However, the real-world experiments on depth estimation (Table \ref{tab:efficiency}) use a ResNet-50 backbone, and the added time and memory are negligible (18.6\% and 0.3\%, respectively), even with $n_m=300$. 
%Note that 
These increases are only during training and do not add 
%additional 
demands for inference.

\begin{table}[t!]
	\caption{Quantitative comparison of the time consumption and memory usage on the synthetic dataset and NYU-Depth-v2, and the corresponding training times are $10000$ and $1$ epoch, respectively. 
	}
	\centering
 \label{tab:efficiency}
		\scalebox{0.7}{\begin{tabular}{cc|cc|cc}
			\hline
			\multirow{3}[0]{*}{$n_m$} &\multirow{3}[0]{*}{Regularizer}& \multicolumn{2}{c|}{Coordinate Prediction  } & \multicolumn{2}{c}{Depth Estimation }  \\
			& & \multicolumn{2}{c|}{ ($2$ Layer MLP) } & \multicolumn{2}{c}{(ResNet-50)}  \\
 			\cline{3-6}
			& & Training(s) & Memory (MB) & Training(s) & Memory (MB) \\
			\hline
			0 & - & 8.88 & 959 & 1929 & 11821   \\
			100 & $\mL_t$ & 175.06 & 959 & 1942 & 11833   \\
			100 & $\mL_d$ & 439.68 & 973 & 1950 & 12211   \\
			100 & $\mL_t + \mL_d$ & 617.41 & 973 & 1980 & 12211   \\
			300 & $\mL_t + \mL_d$ & - & - & 2370 & 12211   \\
   			\hline
		\end{tabular}}
\end{table}

\section{Conclusion}

In this paper, we establish novel connections between topology and the IB principle for regression representation learning. The established connections imply that the desired $\bZ$ should exhibit topological similarity to the target space and share the same intrinsic dimension as the target space. Inspired by the connections, we proposed a regularizer 
to learn the desired $\bZ$.
%named PH-Reg, to lower the intrinsic dimension of feature space and keep the topology of the target space for regression. 
Experiments on synthetic and real-world regression tasks demonstrate its benefits. 

\section*{Acknowledgement}
This research / project is supported by the Ministry of Education, Singapore, under the Academic Research Fund Tier 1 (FY2022).

\section*{Impact Statement}
This paper presents work whose goal is to advance the field of Machine Learning. There are many potential societal consequences of our work, none of which we feel must be specifically highlighted here.

\bibliography{main}
\bibliographystyle{icml2024}

\appendix

\newpage
\onecolumn

\section{Proofs}
\label{appendix:proofs}

\subsection{Proof of the Theorem \ref{thm:bottleneck}}
\label{appendix:thm:bottleneck}

\textbf{Theorem \ref{thm:bottleneck} }
\emph{Assume that the conditional entropy $\mH(\bZ|\bX)$ is a fixed constant for $\bZ \in \mathcal Z$ for some set  $\mathcal Z$ of the random variables, or $\bZ$ is determined given $\bX$. Then,  $\min_{\bZ} ~ \mI\mB= \min_{\bZ} ~ \{(1-\beta) \mH(\bY|\bZ) + \beta \mH(\bZ|\bY) \}$.}

\begin{proof}
From the definition of the mutual information, we have
$$
\mI(\bZ; \bX) = \mH(\bZ) - \mH(\bZ|\bX) = \mI(\bZ; \bY) + \mH(\bZ|\bY) - \mH(\bZ|\bX).
$$
By substituting the right-hand side of this equation into $\mI(\bZ; \bX)$, 
\begin{align}
\mI\mB = - \mI(\bZ; \bY) + \beta \mI(\bZ; \bX) = (\beta-1)\mI(\bZ; \bY) + \beta \mH(\bZ|\bY) - \beta \mH(\bZ|\bX)
\end{align}
Since $\mI(\bZ; \bY)=\mH(\bY) - \mH(\bY|\bZ)$,
\begin{align}
\mI\mB &= (\beta-1)(\mH(\bY) - \mH(\bY|\bZ)) + \beta \mH(\bZ|\bY) - \beta \mH(\bZ|\bX)
\\ & = (1-\beta)\mH(\bY|\bZ)  + \beta \mH(\bZ|\bY) + (\beta-1) \mH(\bY) - \beta \mH(\bZ|\bX).
\end{align}

1) If $\mH(\bZ|\bX)$ is a constant for $\bZ \in \mathcal Z$. Since $\mH(\bY)$ is a fixed constant for any $\bZ$, this implies that
$$
\mI\mB=(1-\beta) \mH(\bY|\bZ) + \beta \mH(\bZ|\bY) + C,
$$
where $C$ is a fixed constant for $\bZ \in \mathcal Z$. Thus:
$$
    \min_{\bZ} ~ \mI\mB= \min_{\bZ} ~ \{(1-\beta) \mH(\bY|\bZ) + \beta \mH(\bZ|\bY) \}.
$$

2) If $\bZ$ is determined given $\bX$, then $\mH(\bZ|\bX)$ is not a term can be optimized. Since $\mH(\bY)$ is a fixed constant for any $\bZ$: 
$$
    \min_{\bZ} ~ \mI\mB= \min_{\bZ} ~ \{(1-\beta) \mH(\bY|\bZ) + \beta \mH(\bZ|\bY) \}.
$$
\end{proof}

\begin{comment}
\textbf{Theorem \ref{thm:bottleneck} }
% \emph{Assume the conditional entropy $\mH(\bZ|\bX)$ is a fixed constant, then optimizing the IB bottleneck tradeoff can be formulated into an optimization problem that minimizes the conditional entropies $\mH(\bY|\bZ)$ and $\mH(\bZ|\bY)$.}
\emph{Assume the conditional entropy $\mH(\bZ|\bX)$ is a fixed constant, then minimizing the conditional entropies  $\mH(\bY|\bZ)$ and $\mH(\bZ|\bY)$ can be seen as a proxy for optimizing the IB tradeoff.}

\begin{proof}
We first formulate the information bottleneck trade-off, \ie maximize $\mI(\bZ; \bY)$ and minimize $\mI(\bZ; \bX)$, into an optimization problem similar to the bottleneck trade-off Lagrangian: $\min_{\bZ} \frac{\mI(\bZ; \bX)}{\beta\mI(\bZ; \bY)}$, where $\beta>0$ is a constant similar to the Lagrange multiplier. We note:
\begin{align}
      \frac{\mI(\bZ; \bX)}{\beta\mI(\bZ; \bY)} 
      &= \frac{\mH(\bZ) - \mH(\bZ| \bX)}{\beta\mI(\bZ; \bY)} \\
    &= \frac{\mI(\bZ; \bY) + \mH(\bZ |\bY) - \mH(\bZ| \bX)}{\beta\mI(\bZ; \bY)} \\
    &= \frac{1}{\beta} + \frac{\mH(\bZ |\bY) - \mH(\bZ| \bX)}{\beta(\mH(\bY) - \mH(\bY|\bZ))}.
\end{align}
$\mH(\bZ|\bX)$ can be ignored since it is a fixed constant. Based on the above equation, we have:
\begin{equation}
      \min_{\bZ} \frac{\mI(\bZ; \bX)}{\beta\mI(\bZ; \bY)} 
    = \min_{\bZ} \frac{\mH(\bZ |\bY) - \mH(\bZ| \bX)}{\beta(\mH(\bY) - \mH(\bY|\bZ))}
    = \min_{\bZ} \frac{\mH(\bZ |\bY)}{\beta(\mH(\bY) - \mH(\bY|\bZ))}.
\end{equation}
Since $\mH(\bY)$ is a constant and $\mH(\bY|\bZ)< \mH(\bY)$, minimizing $\frac{\mI(\bZ; \bX)}{\beta\mI(\bZ; \bY)}$ equals minimizing $\mH(\bY|\bZ)$ and $\mH(\bZ|\bY)$.
\end{proof}

\end{comment}

\subsection{Proof of the Theorem \ref{thm:bound}  and Proposition \ref{prop_supportTheorem1}}
\label{appendix:thm:bound}

\textbf{Theorem \ref{thm:bound} }
\emph{Consider dataset $S = \{\bx_i,\bz_i, \by_i\}^N_{i=1}$ sampled from distribution $P$, where $\bx_i$ is the input, $\bz_i$ is the corresponding representation, and $\by_i$ is the label. Let  $d_{max} = \max_{\by\in \mY} \min_{\by_i \in S} ||\by-\by_i||_2$  be the maximum distance of $\by$ to its nearest $\by_i$. Assume  $(\bZ|\bY=\by_i)$ follows a distribution $\mD$ and 
%\hl{the following} holds: 
the dispersion of $\mD$ is bounded by its entropy:
\begin{equation}
    \mmE_{\bz \sim \mD}[||\bz - \bar{\bz}||_2] \leq Q(\mH(\mD)),
\end{equation}
where  $\bar{\bz}$ is the mean of the distribution $\mD$ and $Q(\mH(\mD))$ is some function of 
$\mH(\mD)$. 
%The above implies the dispersion of the distribution $\mD$ is bounded by its entropy, which usually is the case, like the multivariate normal distribution and the uniform distribution. 
Assume the regressor f is $L_1$-Lipschitz continuous, then as $d_{max} \rightarrow 0$, we have:
%~\AY{XX}
\begin{align} \label{eq:1}
    &  \mmE_{\{\bx, \bz, \by\} \sim P} [||f(\bz) - \by||_2]  \leq  \mmE_{\{\bx, \bz, \by\} \sim S}(||f(\bz) -\by||_2) 
    + 2L_1 Q(\mH(\bZ|\bY))
\end{align}
}

\begin{proof}
For any sample $\{\bx_i, \bz_i, \by_i\}$, we define its local neighborhood set $N_i$ as 
\begin{equation}
    N_i = \{\{\bx, \bz, \by\}: ||\by - \by_i||_2< ||\by - \by_j||_2, j \neq i, p(\by)>0 \}.
\end{equation}

For each set $N_i$, we have 
% \begin{equation}
%     \mmE_{(z,y)}[f(z) - y] = \mmE_{N_i} .
% \end{equation}
\begin{align}
    &\mmE_{\{\bx, \bz, \by\} \sim N_i}[||f(\bz) - \by||_2] = \mmE_{\{\bx, \bz, \by\} \sim N_i}[||f(\bz) - f(\bz_i) + f(\bz_i) -\by_i + \by_i - \by||_2] \\
    \leq&\mmE_{\{\bx, \bz, \by\} \sim N_i} [||f(\bz) - f(\bz_i)||_2]
    + \mmE_{\{\bx, \bz, \by\} \sim N_i} [||f(\bz_i) -\by_i||_2] 
    + \mmE_{\{\bx, \bz, \by\}\sim N_i} [||\by_i - \by||_2] \\
    \leq & L_1  \mmE_{\{\bx, \bz, \by\} \sim N_i}[||\bz - \bz_i||_2] + \mmE_{\{\bx, \bz, \by\} \sim N_i} [||f(\bz_i) -\by_i||_2] + d_{max}\\
    = & L_1  \mmE_{\{\bx, \bz, \by\} \sim N_i}[||\bz -\bar{\bz}_i + \bar{\bz}_i- \bz_i||_2] + \mmE_{\{\bx, \bz, \by\} \sim N_i} [||f(\bz_i) -\by_i||_2] + d_{max} \\
    \leq & L_1\mmE_{\{\bx, \bz, \by\} \sim N_i}[||\bz -\bar{\bz}_i||_2 + ||\bar{\bz}_i- \bz_i||_2] + \mmE_{\{\bx, \bz, \by\} \sim N_i} [||f(\bz_i) -\by_i||_2] + d_{max} \\
    = &L_1\mmE_{\{\bx, \bz, \by\} \sim N_i}[||\bz -\bar{\bz}_i||_2] + L_1||\bar{\bz}_i- \bz_i||_2 + \mmE_{\{\bx, \bz, \by\} \sim N_i} [||f(\bz_i) -\by_i||_2] + d_{max}
\end{align}

We denote the probability distribution over $\{N_i\}$ as $P'$, where $P(N_i) = P ( \{\bx, \bz, \by\} \in N_i\})$. Then, we have 
\begin{align}
    &\mmE_{\{\bx, \bz, \by\} \sim P}[||f(\bz) - \by||_2] = \mmE_{N_i \sim P'}\mmE_{\{\bx, \bz, \by\} \sim N_i}[||f(\bz) - \by||_2] \\
    \leq & \mmE_{N_i \sim P'} [L_1\mmE_{\{\bx, \bz, \by\} \sim N_i}[||\bz -\bar{\bz}_i||_2] + L_1||\bar{\bz}_i- \bz_i||_2 + \mmE_{\{\bx, \bz, \by\} \sim N_i} [||f(\bz_i) -\by_i||_2] + d_{max}] \\
    = & L_1 \mmE_{N_i \sim P'}\mmE_{\{\bx, \bz, \by\} \sim N_i}[||\bz - \bar{\bz}_i||_2] + L_1\mmE_{N_i \sim P'} ||\bar{\bz}_i -\bz_i||_2 +\mmE_{\{\bx, \bz, \by\} \sim S}(||f(\bz_i) -\by_i||_2) + d_{max} 
\end{align}

As $d_{max} \rightarrow 0$, we can approximate $ \mmE_{N_i \sim P'}\mmE_{\{\bx, \bz, \by\} \sim N_i}[||\bz - \bar{\bz}_i||_2]$ as $ \mmE_{\by_i \sim \mY}\mmE_{\{(\bx, \bz, \by)| \by=\by_i\}}[||\bz - \bar{\bz}_i||_2]$.
%and $\mmE_{N_i \sim P'} ||\bz_i -\bar{\bz}_i||_2$ can also be approximate as $\mmE_{y \sim \mY}[||\bz_i -\bar{\bz}_i||_2]$.
Since $(\bZ|\bY=\by_i) \sim \mD$, we have $\mH(\bZ|\bY) = \mmE_{y \sim \mY} \mH(\bZ| \bY=y) = \mH(\bZ|\bY=\by_i) = \mH(\bZ|\bY=\by_j) =\mH(\mD)$ for all $1\leq i, j \leq N$,  and $\mmE_{N_i \sim P'} ||\bz_i -\bar{\bz}_i||_2$ can thus be approximate as $\mmE_{\{(\bx, \bz, \by)| \by=\by_i\}}||\bz - \bar{\bz}_i||_2$. 
We have: 

% Without losing generality, we assume the $\bz_i = \bz'_1$, then we have:
% \begin{align}
%     \mmE_{\{(\bx, \bz, y); y=y_i\}}[||\bz - \bz_i||_2] 
%     \leq e^{\frac{1}{2Kd}\hat{\mH}(\bZ| \bY=y_i)}
% \end{align}
\begin{align}
    &\mmE_{\{\bx, \bz, \by\} \sim P}[||f(\bz) - \by||_2]\\
    &\leq L_1 \mmE_{N_i \sim P'}\mmE_{\{\bx, \bz, \by\} \sim N_i}[||\bz - \bar{\bz}_i||_2] + L_1\mmE_{N_i \sim P'} ||\bar{\bz}_i -\bz_i||_2 +\mmE_{\{\bx, \bz, \by\} \sim S}(||f(\bz_i) -\by_i||_2) + d_{max} \\
    &= L_1 \mmE_{y_i \sim \mY}\mmE_{\{(\bx, \bz, y)| y=y_i\}}[||\bz - \bar{\bz}_i||_2] + L_1 \mmE_{\{(\bx, \bz, y)| y=y_i\}}||\bz_i - \bar{\bz}_i||_2 + \mmE_{\{\bx, \bz, \by\} \sim S}(||f(\bz_i) -\by_i||_2) \\
    % & \leq L_1 \mmE_{y_i \sim \mY} \sqrt{\frac{d(e^{2\mH(\bZ|\bY=y_i)})^{\frac{1}{d}}}{2\pi e}}+ L_1  \sqrt{\frac{d(e^{2\mH(\bZ|\bY=y_i)})^{\frac{1}{d}}}{2\pi e}} 
    % +\mmE_{\{\bx, \bz, \by\} \sim S}(||f(\bz_i) -\by_i||_2) \\
    & \leq L_1 \mmE_{y_i \sim \mY} [Q(\mH(\bZ|\bY=\by_i))] + L_1  Q(\mH(\bZ|\bY=\by_i)) 
    +\mmE_{\{\bx, \bz, \by\} \sim S}(||f(\bz_i) -\by_i||_2) \\
     & = 2L_1  Q(\mH(\bZ|\bY))
    +\mmE_{\{\bx, \bz, \by\} \sim S}(||f(\bz_i) -\by_i||_2)    
\end{align}

\end{proof}

\textbf{Proposition \ref{prop_supportTheorem1}}
\emph{
    If $\mD$ is a multivariate normal distribution $\mN(\bar{\bz}, \Sigma = k\bI)$, where $k>0$ is a scalar and $\bar{\bz}$ is the mean of the distribution $\mD$. Then, the function $Q(\mH(\mD))$ in Theorem \ref{thm:bound} can be selected as $Q(\mH(\mD)) = \sqrt{\frac{d(e^{2\mH(\mD)})^{\frac{1}{d}}}{2\pi e}}$, where $d$ is the dimension of $\bz$. If $\mD$ is a uniform distribution, then the $Q(\mH(\mD))$ can be selected as $Q(\mH(\mD)) = \frac{e^{\mH(\mD)}}{\sqrt{12}}$.
}

\begin{proof}
We first consider the case when $\mD \sim \mN(\bar{\bz}, \Sigma = k\bI)$.
Assume $\bZ \sim \mN(\bar{\bz}, \Sigma)$, then $\mH(\bZ)= \frac{1}{2}\log(2\pi e)^n|\Sigma|$:
    \begin{align}
    \mH(\bZ) &= -\int_{\bz}p(\bz)\log(p(\bz))d\bz \\
    & = -\int_{\bz} p(\bz) \log \frac{1}{(\sqrt{2\pi})^d |\Sigma|^{\frac{1}{2}}}e^{\frac{-1}{2}(\bz - \bar{\bz})^{\trsp}\Sigma^{-1}(\bz - \bar{\bz})} d\bz \\
    & = -\int_{\bz} p(\bz) \log \frac{1}{(\sqrt{2\pi})^d |\Sigma|^{\frac{1}{2}}}d\bz 
    -\int_{\bz}p(\bz)\log e^{\frac{1}{2}(\bz - \bar{\bz})^{\trsp}\Sigma^{-1}(\bz - \bar{\bz})} d\bz \\
    &= \frac{1}{2}\log(2\pi)^d|\Sigma| + \frac{\log e}{2} \mmE[\sum_{i,j}(\bz_i - \bar{\bz}_i)(\Sigma^{-1})_{ij}(\bz_j - \bar{\bz}_j)] \\
    &=\frac{1}{2}\log(2\pi)^d|\Sigma| + \frac{\log e}{2} \mmE[\sum_{i,j}(\bz_i - \bar{\bz}_i)(\bz_j - \bar{\bz}_j)(\Sigma^{-1})_{ij}] \\
    &=\frac{1}{2}\log(2\pi)^d|\Sigma| + \frac{\log e}{2} \sum_{i,j}\mmE[(\bz_i - \bar{\bz}_i)(\bz_j - \bar{\bz}_j)](\Sigma^{-1})_{ij} \\
    &=\frac{1}{2}\log(2\pi)^d|\Sigma| + \frac{\log e}{2} \sum_j\sum_i \Sigma_{ji}(\Sigma^{-1})_{ij} \\
    &=\frac{1}{2}\log(2\pi)^d|\Sigma| + \frac{\log e}{2} \sum_j(\Sigma\Sigma^{-1})_{j} \\
    &=\frac{1}{2}\log(2\pi)^d|\Sigma| + \frac{\log e}{2} \sum_j I_{jj} \\
    &=\frac{1}{2}\log(2\pi)^d|\Sigma| + \frac{\log e}{2}\\
    &=\frac{1}{2}\log(2\pi e)^d |\Sigma|
\end{align}

We have the following:
\begin{align}
    \mmE [||\bz - \bar{\bz}||_2^2] = \tr({\Sigma}) = dk.
\end{align}
The following also holds:
\begin{align}
    |\Sigma| = k^d.
\end{align}

Thus, we have:
\begin{align}
    (\mmE [||\bz - \bar{\bz}||_2])^2 \leq \mmE [||\bz - \bar{\bz}||_2^2] = d|\Sigma|^{\frac{1}{d}} = d(\frac{e^{2\mH(\bZ)}}{(2\pi e)^d})^{\frac{1}{d}}= \frac{d(e^{2\mH(\bZ)})^{\frac{1}{d}}}{2\pi e}
\end{align}

Finally, 
    \begin{equation}
        \mmE [||\bz - \bar{\bz}||_2] \leq \sqrt{\frac{d(e^{2\mH(\bZ)})^{\frac{1}{d}}}{2\pi e}}
    \end{equation}
Thus, $Q(\mH(\mD))$ in Theorem \ref{thm:bound} can be selected as $Q(\mH(\mD)) = \sqrt{\frac{d(e^{2\mH(\mD)})^{\frac{1}{d}}}{2\pi e}}$, when $\mD \sim \mN(\bar{\bz}, \Sigma = k\bI)$.

Similarly, if $\mD$ is a uniform distribution $U(a,b)$, then its variance is given by:
\begin{equation}
    \mmE [||\bz - \bar{\bz}||_2^2] = \frac{(b-a)^2}{12},
\end{equation}
and its entropy is given by:
\begin{equation}
    \mH(\bD) = \log (b-a).
\end{equation}

We have: 
\begin{align}
    (\mmE [||\bz - \bar{\bz}||_2])^2 \leq \mmE [||\bz - \bar{\bz}||_2^2] = \frac{(b-a)^2}{12} = \frac{e^{2\mH(\mD)}}{12}
\end{align}

Finally,
    \begin{equation}
        \mmE [||\bz - \bar{\bz}||_2] \leq \frac{e^{\mH(\mD)}}{\sqrt{12}}
    \end{equation}

Thus, $Q(\mH(\mD))$ in Theorem \ref{thm:bound} can be selected as $Q(\mH(\mD)) = \frac{e^{\mH(\mD)}}{\sqrt{12}}$, when $\mD$ is a uniform distribution

\end{proof}

\subsection{Proof of the Theorem \ref{thm:entropyToID}}
\label{appendix:thm:entropyToID}

We first show a straightforward result of [\cite{ghosh2023local}, Proposition 1]:
\begin{lemma}
\label{lemma:entropyToID}
    Assume that $\bz$ lies in a manifold $\mM$ and the $\mM_i \subset \mM$ is a 
%\hl{manifold corresponding to $\mH(\bZ|\bY=\by_i)$}. 
manifold corresponding to the distribution $(\bz|\by=\by_i)$.
Assume for all features $\bz_i \in \mM_i$, the following holds:
\begin{equation}
\int_{||\bz-\bz_i|| \leq \epsilon} P(\bz) d\bz = C(\epsilon),
\end{equation}
where $C(\epsilon)$ is some function of $\epsilon$.
The above imposes a constraint where the distribution $(\bz|\by=\by_i)$ is uniformly distributed across $\mM_i$.
%~\AY{add sentence here explaining eq. 2 in lay-man terms.} 
Then, as $\epsilon \rightarrow 0 ^{+}$, 
%\ie
% ~\AY{as the distance between XX} 
%the distribution $(\bz|\by=\by_i)$ is uniformly distributed, 
we have:
\begin{equation}
    \mH(\bZ|\bY) = \mmE_{\by_i \sim \mY} \mH(\bZ| \bY=\by_i) = \mmE_{\by_i \sim \mY}[-\log(\epsilon)\text{Dim}_{\text{ID}} \mM_i + \log\frac{K}{C(\epsilon)}],
\end{equation}
for some fixed scalar K. $\text{Dim}_{\text{ID}}\mM_i$ is the intrinsic dimension of the manifold $\mM_i$.
\end{lemma}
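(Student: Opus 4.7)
The plan is to reduce the global identity to a pointwise statement for each fixed $\by_i$ and then average using the tower property for conditional entropy. By definition, $\mH(\bZ|\bY) = \mmE_{\by_i \sim \mY}[\mH(\bZ|\bY = \by_i)]$, so once we establish, for $\mY$-almost every $\by_i$, the asymptotic equality
\begin{equation*}
\mH(\bZ|\bY=\by_i) = -\log(\epsilon)\,\text{Dim}_{\text{ID}}\mM_i + \log\frac{K}{C(\epsilon)}
\end{equation*}
as $\epsilon \to 0^+$, linearity of expectation gives the stated formula. The first step is thus to isolate a fixed $\by_i$ and show this local identity on the submanifold $\mM_i$.

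For that fixed $\by_i$, the conditional law $(\bz|\bY=\by_i)$ is supported on $\mM_i$, and the hypothesis $\int_{\|\bz-\bz'\|\le\epsilon} P'(\bz)\,d\bz = C(\epsilon)$ for every $\bz' \in \mM_i$ says exactly that the restricted distribution assigns the same mass $C(\epsilon)$ to every $\epsilon$-ball centered on a point of $\mM_i$. This is the homogeneity/uniformity hypothesis under which \citet{ghosh2023local}'s Proposition 1 yields an exact (not merely upper-bound) expression for the entropy in terms of the local intrinsic dimension $d_\epsilon(\bz')$ of Definition~\ref{deftn:ID}. Applying that proposition to the restricted distribution $(\bz|\bY=\by_i)$, and using that its local dimension function, once averaged over $\mM_i$, converges to $\text{Dim}_{\text{ID}}\mM_i$ as $\epsilon \to 0^+$, I obtain the required pointwise identity, with the scalar $K$ arising from the Euclidean-ball volume constant that relates the local mass $C(\epsilon)$ to the volume element on $\mM_i$.

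The final step is to integrate this equality over $\by_i \sim \mY$. Because the left-hand side is linear in $\mH(\bZ|\bY=\by_i)$ and the right-hand side depends on $\by_i$ only through $\text{Dim}_{\text{ID}}\mM_i$ (with $K$ and $C(\epsilon)$ independent of $\by_i$), the expectation can be distributed term by term to produce exactly the formula in the lemma statement.

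The main obstacle is to match the precise form of \citet{ghosh2023local}'s Proposition 1 to the notation here and to argue that the constant $K$ can indeed be chosen independently of $\by_i$: this requires the local geometry of each $\mM_i$ to admit a common small-ball volume normalization, which follows from the standard regularity implicit in Definition~\ref{deftn:ID}. A secondary technical point is the order of the limit $\epsilon \to 0^+$ and the outer expectation $\mmE_{\by_i \sim \mY}$; a uniform-in-$\by_i$ control on the remainder, or a dominated-convergence argument based on a tail bound on $\mH(\bZ|\bY=\by_i)$, makes this exchange rigorous and delivers the claimed asymptotic equality.
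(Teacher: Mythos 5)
Your proposal matches the paper's own proof: the paper likewise invokes the proof technique of \citet{ghosh2023local}'s Proposition 1 under the uniform-mass hypothesis to obtain the pointwise identity $\mH(\bZ|\bY=\by_i) = -\log(\epsilon)\,\text{Dim}_{\text{ID}}\mM_i + \log\frac{K}{C(\epsilon)}$ for each fixed $\by_i$, and then concludes by taking the expectation $\mmE_{\by_i\sim\mY}$. Your additional remarks on the $\by_i$-independence of $K$ and on exchanging the limit with the outer expectation are careful points the paper leaves implicit, but they do not change the route.
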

\begin{proof}
By using the same proof technique as [\cite{ghosh2023local}, Proposition 1], we can show
\begin{equation}
    \mH(\bZ| \bY=\by_i) =-\log(\epsilon)\text{Dim}_{\text{ID}} \mM_i + \log\frac{K}{C(\epsilon)},
\end{equation}
Since $\mH(\bZ|\bY) = \mmE_{\by_i \sim \mY} \mH(\bZ| \bY=\by_i)$, the result follows.
\end{proof}

\textbf{Theorem \ref{thm:entropyToID}}
\emph{
Assume that $\bz$ lies in a manifold $\mM$ and the $\mM_i \subset \mM$ is a 
%\hl{manifold corresponding to $\mH(\bZ|\bY=\by_i)$}. 
manifold corresponding to the distribution $(\bz|\by=\by_i)$.
Let $C(\epsilon)$ be some function of $\epsilon$:
\begin{equation}
C(\epsilon) = \int_{||\bz-\bz'|| \leq \epsilon} P'(\bz) d\bz,
\end{equation}
where $P'(\bz)$ is the probability of $\bz$ when $(\bz|\by=\by_i)$ is uniformly distributed across $\mM_i$, and $\bz'$ is any point on $\mM_i$.
Then, as $\epsilon \rightarrow 0 ^{+}$, 
we have:
\begin{align}
    \mH(\bZ|\bY) &= \mmE_{\by_i \sim \mY} \mH(\bZ| \bY=\by_i) \leq \mmE_{\by_i \sim \mY}[-\log(\epsilon)\text{Dim}_{\text{ID}} \mM_i + \log\frac{K}{C(\epsilon)}],
\end{align}
for some fixed scalar K. $\text{Dim}_{\text{ID}}\mM_i$ is the intrinsic dimension of the manifold $\mM_i$.}

\begin{proof}
Since the uniform distribution has the largest entropy over all distributions over the support $\mM_i$, based on  Lemma \ref{lemma:entropyToID}, we thus have:
\begin{align}
    \mH(\bZ|\bY) = \mmE_{\by_i \sim \mY} \mH(\bZ| \bY=\by_i) \leq \mmE_{\by_i \sim \mY}[-\log(\epsilon)\text{Dim}_{\text{ID}} \mM_i + \log\frac{K}{C(\epsilon)}],
\end{align}

\end{proof}

\subsection{Proof of the Proposition \ref{prop:optimal}}
\label{appendix:prop:optimal}

\textbf{Proposition \ref{prop:optimal}}
\emph{
Let the target $\bY =\bY' + \bN'$  
where $\bY'$ is fully determined by $\bX$ and $\bN'$ is 
the aleatoric uncertainty that is independent of $\bX$. Assume the underlying mapping $f'$ from $\bZ$ to $\bY'$ and $g'$ from $\bY'$ to $\bZ$ are continuous, where the continuous mapping is based on the topology induced by the Euclidean distance. Then the representation $\bZ$ is optimal if and only if $\bZ$ is homeomorphic to $\bY'$. 
}

\begin{proof}
If $\bZ$ is optimal (optimal $\bZ$ $\Rightarrow$ $\bZ$ is homeomorphic to $\bY'$): 
\begin{align}
    \mH(\bY|\bZ) = \mH(\bY' + \bN' | \bZ) = \mH(\bY'|\bZ) + \mH(\bN'|\bZ,\bY') = \mH(\bY'|\bZ) + \mH(\bN'|\bY'),
\end{align}
\begin{align}
    \mH(\bY|\bX) = \mH(\bY' + \bN' | \bX) = \mH(\bY'|\bX) + \mH(\bN'|\bX,\bY') = \mH(\bY'|\bX) + \mH(\bN'|\bY').
\end{align}

Since $\bZ$ is optimal, we have $\mH(\bY|\bZ) = \mH(\bY|\bX)$. Based on the two equations above, we have:
\begin{align}
    \mH(\bY'|\bZ) = \mH(\bY'|\bX).
\end{align}
Since $\bY'$ is fully determined by $\bX$ and $\mH(\bY'|\bZ) = \mH(\bY'|\bX)$, $\bY$ is also fully determined by $\bZ$. 
Thus, for each $\bz_i \in \bZ$, there exists and only exists one $\by'_i \in \bY'$ corresponding to the $\bz_i$, and thus the mapping function $f'$ exists. 

$\bZ$ is optimal also means $\bZ$ is fully determined given $\bY$, Since $\bN'$ is independent of $\bZ$:
\begin{align}
    \mH(\bZ|\bY) = \mH(\bZ|\bY' +\bN') = \mH(\bZ|\bY'),
\end{align}
thus, for each $\by'_i$, there exist and only exist one $\bz_i$ corresponding to the $\by'_i$. Thus, the mapping function $f'$ is a bijection, and its inverse $f'^{-1}$ is $g'$. Since $f'$ and $f'^{-1}$ are continuous, $\bZ$ is homeomorphic to $\bY$.

If $\bZ$ is homeomorphic to $\bY'$: ( $\bZ$ is homeomorphic to $\bY'$ $\Rightarrow$ optimal $\bZ$ ): 

$\bZ$ is homeomorphic to $\bY'$ means a continuous bijection exist between $\bZ$ and $\bY'$, thus $\mH(\bZ|\bY) = \mH(\bZ|\bY') = \mH(\bY'|\bZ)$ and $\mH(\bZ|\bY)$ is minimal. We have:
\begin{align}
    \mH(\bY|\bZ)= \mH(\bY'|\bZ) + \mH(\bN'|\bY') = \mH(\bN'|\bY') = \mH(\bY'|\bX) + \mH(\bN'|\bY') = \mH(\bY|\bX),
\end{align}
thus, $\bZ$ is optimal.
\end{proof}

\section{Discussions about Theorem \ref{thm:bottleneck}}
\label{appendix:explainTheorem1}
\textbf{Choice of $\beta$ in Theorem \ref{thm:bottleneck}}: $\beta$ in $(0, 1)$ means we need to maximize $\mI(\bZ; \bY)$ for sufficiency, while we want to minimize $\mI(\bZ; \bX)$ for minimality. When $\beta > 1$, then we value the minimality more than the sufficiency, resulting in the need to maximize $\mH(\bY|\bZ)$. But, in the typical setting, we always value $\mI(\bZ; \bY)$ more than $\mI(\bZ, \bX)$ for a good task-specific performance, and $\beta >0$ will lead $\bZ$ compressed to be a single point, as $\mH(\bZ|\bY)$ is minimal and $\mH(\bY|\bZ)$ is maximized in this case. The qualitative behavior should change based on $0 < \beta < 1$ or $\beta > 1$, as it controls which we value the more: sufficiency or minimality.

\textbf{Difference between the target $\bY$ and the predicted $\bY'$}: The target $\bY$ is different from the predicted $\bY'$. $\mH(\bY'|\bZ)$ always equals to $0$ if we exploit neural networks as deterministic functions. In an extreme case, we can treat the predicted $\bY'$ as the representation $\bZ$, which shows minimizing $\mH(\bY'|\bY) = 0$ is the learning target. 
From the invariance representation learning point of view, lowering $\mH(\bZ|\bY)$ is learning invariance representations with respect to $\bY$.

\textbf{More discussions about the assumptions:} For discrete entropy, $\bZ$ is determined given $\bY$ implies $\mH(\bZ|\bY)$ is a constant for $\bZ$, as $\mH(\bZ|\bY) = 0$ here. However, this does not hold for differential entropy. For differential entropy, $\bZ$ is determined given $\bY$ means $\mH(\bZ|\bY) = -\infty$, since given Y, Z is distributed as a delta function in this case.

\section{Connections with the bound in \citet{kawaguchi2023does}}
\label{appendix:compareBound}
\citet{kawaguchi2023does} provide several bounds that are all applicable for both classification and regression for various cases. In the case where the encoder model $\phi$ (whose output is $z$) and the training dataset of the downstream task $\hat S=\{x_i,y_i\}_{i=1}^N$ are dependent (e.g., this is when $z=\phi(x)$ for $x \notin S$ is dependent of all $N$ training data points $(x_i,y_i)_{i=1}^N$ through the training of $\phi$ by using $\hat S$), they show that any valid and general generalization bound of the information bottleneck must include two terms, $I(\bX; \bZ|\bY)$ and $I(\phi; \hat S)$, where the second term measures the effect of overfitting the encoder $\phi$. This is because the encoder $\phi$ can compress all information to minimize $I(\bX; \bZ|\bY)$ arbitrarily well while overfitting to the training data: e.g., we can simply set $\phi(x_i)=y_i$ for all $(x_i,y_i) \in \hat S$ and $\phi(x)=c\neq y$ for all $(x,y) \notin \hat S$ for some constant $c$ to achieve the best training loss while minimizing $I(\bX; \bZ|\bY)$ and performing arbitrarily poorly for test loss. Given this observation, they prove the first rigorous generalization bounds for two separate cases based on the dependence of $\phi$ and $\hat S$. Their generalization bounds scales with $I(\bX; \bZ|\bY)$ without the second term  $I(\phi; \hat S)$ in case of $\phi$ and $\hat S$ being independent, and with $I(\bX; \bZ|\bY)$ and $I(\phi; \hat S)$ in case of $\phi$ and $\hat S$ being dependent.

In Theorem \ref{thm:bound}, we consider the case where $\phi$ and $\hat S$ are independent, since $z$ in $(x,z,y) \sim P$ is drawn without dependence on the entire $N$ data points $\{x_i,y_i\}_{i=1}^N$ in equation \eqref{eq:1}. Thus, our results are consistent and do not contradict previous findings. Unlike the previous bounds, our bound is determined by the function $Q$, which characterizes the dispersion or the standard deviation of a distribution by its entropy. The function $Q$ exists for general cases, as the dispersion or the standard deviation and the entropy commonly can be estimated for a specific distribution. We thus can find a function $Q$ to upper bound the relationship on the entropy and its dispersion or the standard deviation.

It is worth mentioning that we are not targeting a tight or an advanced bound here. Our bound is introduced to support the analysis that follows after Theorem \ref{thm:bound}, which is challenging with the previous bounds. 

\section{From Proposition \ref{prop:optimal}, whether the optimal representation $\bZ$ is the one that equals the ground-truth label?}
\label{appendix:discussionProposition2}

Such $\bZ$ can be one of the optimal/best representations under the negligible aleatoric uncertainty setting. However, $\bZ$ is not unique and Proposition \ref{prop:optimal} is broader than this statement as it says that the optimal representation $\bZ$ should be homeomorphic to the ground truth - i.e. they only need to be topologically equivalent.  In this regard, the feature space may resemble a square, while the target space is a circle. 

A practical benefit of Proposition \ref{prop:optimal} is its guidance on the desirable topological properties of $\bZ$. For example, if the target space is a single connected component ( i.e. $\beta_0 = 1$), then the feature space should also be similar; this does not hold in general (the task-specific loss alone cannot guarantee a single connected feature space and the topology of the feature space is also influenced by the input space. In addition, we observe empirically on depth estimation that the feature space sometimes consists of multiple disconnected components).

\section{Discussions about the regressor}
\label{appendix:discussionRegressor}

\textbf{1. Do the appropriate properties (i.e., lower ID and homeomorphism) depend on the regressor?} 

Although the appropriate feature space may vary from regressor to regressor, the appropriate \textbf{properties}, as supported by our theorems, do not depend on the regressor. Specifically, our Theorem \ref{thm:bottleneck} shows that the IB tradeoff is fully determined by the values of $\mathcal H(\bZ|\bY)$ and $\mathcal H(\bY|\bZ)$.  These values of the entropy terms do not depend on the regressor that maps $\bZ$ to the predicted $\hat{\bY}$. In addition, for depth estimation, representations learned \textbf{purely} by PH-Reg without any other loss terms are also highly competitive (see Table \ref{tab:append_nyu}).

\begin{table}[t]
	\caption{Depth estimation on NYU-Depth-V2. Here, Random represents the encoder is fixed in a random state, while PH-Reg means we first train the encoder purely with PH-Reg for 1 epoch, then fix it and train the regressor.}
	\label{tab:append_nyu}
	\centering
		\scalebox{1}{\begin{tabular}{l|c|cccc}
			\hline
			\multirow{1}[0]{*}{Encoder} & {Regressor}
			& $\delta_1$~$\uparrow$&  REL~$\downarrow$ & RMS~$\downarrow$ & $\log_{10}$~$\downarrow$ \\
			\hline
			Random & Linear & 0.398& 0.390& 1.144  & 0.153 \\
			PH-Reg & Linear & 0.428& 0.391& 1.043  & 0.153 \\
			Random & NonLinear & 0.412& 0.381& 1.121  & 0.149 \\
			PH-Rege & NonLinear & 0.440& 0.374& 1.052  & 0.141 \\
			\hline
		\end{tabular}}
\end{table}

\textbf{2. Topology regularization with “simple” vs highly expressive regressors}

For both regressors, our regularizer leads to representations with a higher signal-to-noise ratio (since $\mathcal H(\bZ|\bY)$ and $\mathcal H(\bZ|\bY)$ are minimized).  This should make it easier for the regressor to estimate the true underlying signal. However, more expressive regressors have a higher capacity to estimate the underlying signal directly, so the room for improvement from the regularization is reduced.  In the extreme case of too much expressiveness, our regularizer may again lead to improvements as it may limit overfitting,  by minimizing noise in the learned representation. 

The topology properties supported by our theorems align with invariant representation learning (i.e. invariance to noise), where invariance serves as a general prior for desirable representation properties \cite{bengio2013representation}.

\section{Preliminaries on Topology}
\label{appendix:Preliminaries}
The simplicial complex is a central object in topological data analysis, and it can be exploited as a tool to model the `shape' of data. Given a set of finite samples $\bS = \{s_i\}$, the simplicial complex $K$ can be seen as a collection of simplices $\sigma = \{s_0, \cdots, s_k\}$ of varying dimensions: vertices $(|\sigma|=1)$, edges$(|\sigma|=2)$, and the higher-dimensional counterparts$(|\sigma|>2)$. 
The faces of a simplex $\sigma = \{s_0, \cdots, s_k\}$ is the simplex spanned by the subset of $\{s_0, \cdots, s_k\}$. The dimension of the simplicial complex $K$ is the largest dimension of its simplices. A simplicial complex can be regarded as a high-dimensional generalization of a graph, and a graph can be seen as a 1-dimensional simplicial complex. 
For each $\bS$, there exist many ways to build simplicial complexes and the Vietoris-Rips Complexes are widely used:

\begin{deftn}
    (Vietoris-Rips Complexes). Given a set of finite samples $\bS$ sampled from the feature space or target space and a threshold  $\alpha \geq 0$, the Vietoris-Rips Complexes VR$_{\alpha}$ is defined as:
    \begin{equation}
        \text{VR}_{\alpha}(\bS) = \{\{ s_0, \cdots, s_k \}, s \in \bS | d(s_i, s_j) \leq \alpha\},
    \end{equation}
    where $d(s_i, s_j)$ is the Euclidean distance between samples $s_i$ and $s_j$.
\end{deftn}

$\text{VR}_{\alpha}(\bS)$ is the set of all simplicial complexes $\{ s_0, \cdots, s_k \}$ where the pairwise distance $d(s_i, s_j)$ is within the threshold $\alpha$.
Let $C_k (\text{VR}_{\alpha}(\bS))$ denote the vector space generated by its $k$-dimensional simplices over $\mmZ_2$\footnote{It is not specific to $\mmZ_2$, but $\mmZ_2$ is a typical choice.}. The boundary operator $\partial_k: C_k (\text{VR}_{\alpha}(\bS)) \rightarrow C_{k-1} (\text{VR}_{\alpha}(\bS))$  maps each simplex to its boundary
, which consists of the sum of all its faces,
is a homomorphism from $C_k (\text{VR}_{\alpha}(\bS))$ to  $C_{k-1} (\text{VR}_{\alpha}(\bS))$. 
It can be shown that $ \partial_k \circ \partial_{k-1} = 0$, which leads to the chain complex: $ \cdots \rightarrow $
, and the $k^{\text{th}}$ homology group $H_{k}(\text{VR}_{\alpha}(\bS))$ is defined as the quotient group $H_{k}(\text{VR}_{\alpha}(\bS)):= \text{ker} \partial_k / \text{im} \partial_{k+1}$. ker represents kernel, which is the set of all elements that are mapped to the zero element. im represents image, which is the set of all the outputs. Rank $H_k(\text{VR}_{\alpha}(\bS))$ is known as the $k^{\text{th}}$ Betti number $\beta_k$, which counts the number of $k$-dimensional holes and can be used to represent the topological features of the manifold that the set of points $\bS$ sampled from.

However, the $H_{k}(\text{VR}_{\alpha}(\bS))$ is obtained based on a single $\alpha$, which is easily affected by small changes in $\bS$. Thus, it is not robust and is of limited use for real-world datasets. The persistent homology considers all the possible $\alpha$ instead of a single one, which results in a sequence of $\beta_k$. This is achieved through a nested sequence of simplicial complexes, called \textit{filtration}: $\text{VR}_{0}(\bS) \subseteq \text{VR}_{\alpha_1}(\bS) \subseteq \cdots \subseteq \text{VR}_{\alpha_m}(\bS)$ for $0 \leq \alpha_1 \leq \alpha_m $. 
Let $\gamma_i = [\alpha_i,\alpha_j]$ be the interval corresponding to a $k$-dimensional hole `birth' at the threshold $\alpha_i$ and `death' at the threshold $\alpha_j$, we denote $\text{PH}_{k}(\text{VR(\bS)}) = \{\gamma_i\}$ the set of `birth' and `death' intervals of the $k$-dimensional holes. We only exploit $\text{PH}_{0}(\text{VR(\bS)})$ in our PH-Reg, since we exploit the topological autoencoder as the topology part and higher topological features merely increase its runtime. An illustration of the calculation $\text{PH}_{0}(\text{VR(\bS)})$ is given in Figure ~\ref{fig_ph}. We define $E(\bS) =  \sum_{\gamma \in \text{PH}_0(\text{VR}(\bS))} |I(\gamma)|$, where $|I(\gamma)|$ is the length of the interval $\gamma$.

\begin{figure}[!t]
\centering
\subfigure[{Illustration of the framework}] {
\label{fig:framework}
\includegraphics[width=0.47\columnwidth]{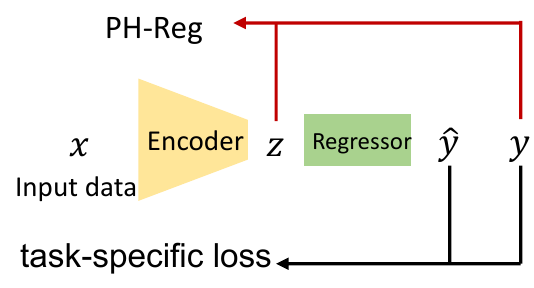}
}
\subfigure[Illustration of $\text{PH}_{0}(\text{VR(\bS)})$] {
\label{fig_ph}
\includegraphics[width=0.45\columnwidth]{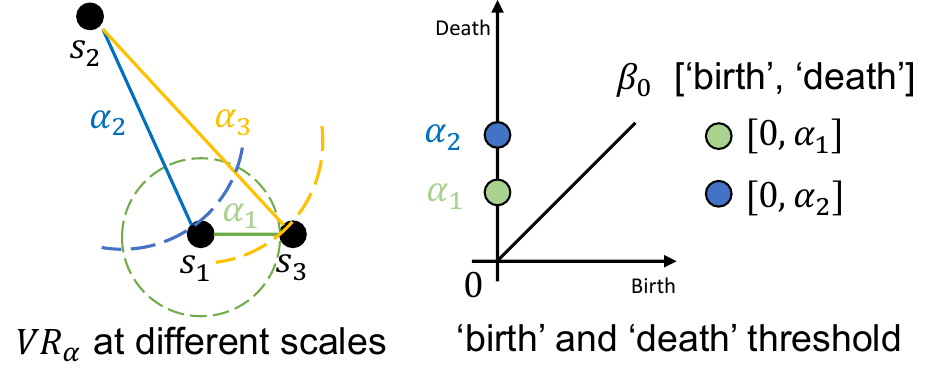}
}
\caption{Illustration of the (a) the use of PH-Reg for regression, and (b) calculating of $\text{PH}_{0}(\text{VR(\bS)})$. Here $\bS=\{s_1, s_2, s_3\}$. 
We say three connected components, \ie $\beta_0$, $(\{\{s_1\}, \{s_2\}, \{s_3\}\})$ `birth' when $\alpha=0$,  one  `death' (two left $(\{\{s_1, s_3\}, s_2\})$) when $\alpha=\alpha_1$, and another one `death' (one left $(\{\{s_1, s_3, s_2\})$) when $\alpha=\alpha_2$. Thus $\text{PH}_{0}(\text{VR(\bS)})=\{[0, \alpha_1], [0, \alpha_2] \}$. 
%When the threshold $\alpha$ increased from $0$ to $\alpha_1$ to $\alpha_2$, $\text{VR}_{\alpha}(\bS)$ contains $3(\{\{s_1\}, \{s_2\}, \{s_3\}\}), 2(\{\{s_1, s_3\}, s_2\})$ and $1(\{s_1, s_2, s_3\})$ connected components ($0^{\text{th}}$ Betti number $b_0$), respectively. We say the three connected components `birth' at $\alpha=0$, and $\{s_2\}$, $\{s_3\}$ `death' when $\alpha$ equals to $\alpha_1, \alpha_2$, respectively. Thus $\text{PH}_{0}(\text{VR(\bS)})=\{[0, \alpha_1], [0, \alpha_2] \}$.
%Similarly, $b_1=0, 1(\{\{s_1, s_3\}\}), 2(\{\{s_1, s_3\},\{s_1, s_2\}\}), 0$ when $\alpha$ increased from $0$ to $\alpha_1$ to $\alpha_2$ to $\alpha_3$, and thus $\text{PH}_{1}(\text{VR(\bS)})=\{[\alpha_1, \alpha_3], [\alpha_2, \alpha_3] \}$. 
%In this work, we only exploit $\text{PH}_{0}(\text{VR(\bS)})$.
}
\label{fig_weightentropy}
\end{figure}

\section{More details about the Coordinate Prediction task}
\label{appendix:syntheticDataset}

\textbf{Details about the synthetic dataset}:
We encode coordinates $\by \in \mmR^3$ into $100$ dimensional vectors $\bx_i = [f_1(\by_i), f_2(\by_i), f_3(\by_i), f_4(\by_i), \text{noise}]$, where the dimensions $1-4$ are signal and the rest $96$ dimensions are noise. The encoder functions $f_i$ are defined as:
\begin{itemize}
    \item $f_1(\by_i) = y_{i_1} + y_{i_2} +y_{i_3}$
    \item $f_2(\by_i) = y_{i_1} + y_{i_2} -y_{i_3}$
    \item $f_3(\by_i) = y_{i_1} - y_{i_2} +y_{i_3}$
    \item $f_4(\by_i) = -y_{i_1} + y_{i_2} +y_{i_3}$
\end{itemize}

As shown above, the accurate coordinates $\by_i$ can be obtained correctly when $f_1(\by_i), f_2(\by_i), f_3(\by_i), f_4(\by_i)$ are given. We introduce noise to the remaining $96$ dimensions by using $f_1, f_2, f_3, f_4$ on other randomly selected samples $\by_j$. The proximity of $\by_j$ to $\by_i$ can be intuitively seen as an indicator of the noise's relationship to the signal.

\textbf{More training details}: We train the models for $10000$ epochs using AdamW as the optimizer with a learning rate of $0.001$. We report results as mean $\pm$ standard variance over $10$ runs. For the regression baseline Oridnal Entropy and the IB baseline Information Dropout, we tried various weights $\{0.01, 0.1, 1, 10 \}$ and reported the best results. The trade-off parameters $\lambda_d$ and $\lambda_t$ are default set to $10$ and $100$, respectively, while $\lambda_t$ is set to $10000$ and $\lambda_d$ is set to $1$ for Mammoth, and $\lambda_d$ is set to 1 for torus and circle.

\section{Details about the real-world tasks}
\label{appendix:realworldTasks}

\subsection{Evaluation metrics}

\textbf{Depth Estimation.} We denote the predicted depth at position $p$ as $y_p$ and the corresponding ground truth depth as $y'_p$, the total number of pixels is $n$. 
The metrics are: 
1) threshold accuracy $\delta_1 \triangleq$ \% of $y_p, ~\mathrm{s.t.}~ \max(\frac{y_p}{y'_p}, \frac{y'_p}{y_p}) < t_1$, where $t_1=1.25$;
2) average relative error (REL): $\frac{1}{n}\sum_p\frac{|y_p - y'_p|}{y_p}$;
3) root mean squared error (RMS):  $\sqrt{\frac{1}{n}\sum_p(y_p - y'_p)^2}$;
4) average ($\log_{10}$ error): $\frac{1}{n}\sum_p|\log_{10}(y_p) - \log_{10}(y'_p)|$.

\textbf{Age Estimation.} Given $N$ images for testing, $y_i$ and $y'_i$ are the $i$-th prediction and ground-truth, respectively. The evaluation metrics include 
1)MAE: $\frac{1}{N}\sum_{i=1}^{N} |y_i - y'_i|$, and 
2)Geometric Mean (GM): $(\prod_{i=1}^{N}|y_i-y'_i|)^{\frac{1}{N}}$.

\subsection{Age estimation on AgeDB-DIR dataset}
We exploit the AgeDB-DIR~\citep{yang2021delving} for age estimation task. We follow the setting of  \citet{yang2021delving} and exploit their regression baseline model, which uses ResNet-50~\citep{he2016deep} as the backbone. 
The evaluation metrics are MAE and geometric mean(GM), and the results are reported on the whole set and the three disjoint subsets, \ie Many, Med. and Few. The trade-off parameters $\lambda_d$ and $\lambda_t$ are set to $0.1$ and $1$, respectively.
Table~\ref{appendix:agetable} shows that both $\mL_t$ and $\mL_d$ can improve the performance, and combining both achieves $0.48$ overall improvements (\ie ALL) on MAE and $0.25$ overall improvements on GM. 

\begin{table}[t!]
	\caption{Results on AgeDB. We report results as mean $\pm$ standard variance over $3$ runs.
	%Baseline and Baseline$^*$ denote the results from the original paper and our re-trained model. 
% 	\AY{\hl{ALL, Many, etc. refer to XX}}
	\textbf{Bold} numbers indicate the best performance.
	}
 \label{appendix:agetable}
	\centering
		\scalebox{0.7}{\begin{tabular}{c|cccc|cccc}
			\hline
			\multirow{2}[0]{*}{Method} & \multicolumn{4}{c|}{MAE~$\downarrow$} & \multicolumn{4}{c}{GM~$\downarrow$}  \\
 			\cline{2-9}
			& ALL & Many & Med. & Few & ALL & Many & Med. & Few\\
			\hline
			Baseline \cite{yang2021delving} & 7.80 $\pm$ 0.12 & 6.80 $\pm$ 0.06 & 9.11	$\pm$ 0.31 & 13.63	$\pm$ 0.43 & 4.98	$\pm$ 0.05 & 4.32	$\pm$ 0.06 & 6.19	$\pm$ 0.07 & 10.29	$\pm$ 0.57 \\
			 $+$ Information dropout \cite{achille2018information} & 8.04	$\pm$ 0.14 & 7.14	$\pm$ 0.20 & 9.10	$\pm$ 0.71 & 13.61	$\pm$ 0.32 & 5.11	$\pm$ 0.06 & 4.49	$\pm$ 0.17 & 6.14	$\pm$ 0.49 & 10.54	$\pm$ 0.65 \\
			 $+$ Oridnal Entropy \cite{zhang2023improving} & 7.65	$\pm$ 0.13 & 6.72	$\pm$ 0.09 & 8.77	$\pm$ 0.49 & 13.28	$\pm$ 0.73 & 4.91	$\pm$ 0.14 & 4.29	$\pm$ 0.06 & 6.04	$\pm$ 0.51 & 10.09	$\pm$ 0.62 \\
    \hline
% 			Baseline$^*$ & 7.73 & 6.74 & 9.05 & 13.35 & 5.05 & 4.38 & 6.44 & 9.77 \\
			$+ \mL'_d$ & 7.75	$\pm$ 0.05 & 6.80	$\pm$ 0.11 & 8.87	$\pm$ 0.05 & 13.61	$\pm$ 0.50 & 4.96	$\pm$ 0.04 & 4.33	$\pm$ 0.09 & 6.05	$\pm$ 0.36 & 10.43	$\pm$ 0.40\\
			$+ \mL_d$ & 7.64 $\pm$ 0.07 & 6.82	$\pm$ 0.07 & 8.62	$\pm$ 0.20 & 12.79	$\pm$ 0.65 & 4.85	$\pm$ 0.05 & 4.27	$\pm$ 0.06 & 5.91	$\pm$ 0.13 & 9.75	$\pm$ 0.53 \\
			$+ \mL_t$ & 7.50	$\pm$ 0.04 & 6.59	$\pm$ 0.03 & 8.75	$\pm$ 0.03 & 12.67	$\pm$ 0.24 & 4.77	$\pm$ 0.07 & 4.27	$\pm$ 0.06 & 6.09	$\pm$ 0.03 & 9.34	$\pm$ 0.70 \\
			$+ \mL_d + \mL_t$ &  \textbf{7.32}	$\pm$ \textbf{0.09} & \textbf{6.50}	$\pm$ \textbf{0.15} & \textbf{8.38}	$\pm$ \textbf{0.11} & \textbf{12.18}	$\pm$ \textbf{0.38} & \textbf{4.69}	$\pm$ \textbf{0.07} & \textbf{4.15}	$\pm$ \textbf{0.08} & \textbf{5.64}	$\pm$ \textbf{0.09} & \textbf{8.99}	$\pm$ \textbf{0.38} \\
			% $+ \mL_d + \mL_t$  & \textbf{7.46} & 6.73 & \textbf{8.18} & \textbf{12.38} & \textbf{4.72} & \textbf{4.21} & \textbf{5.36} & 9.70 \\
			\hline
		\end{tabular}}
\end{table}

\subsection{Super-resolution on DIV2K dataset}
% \label{subsection:superresolution}
We use the DIV2K dataset \citep{timofte2017ntire} for 4x super-resolution training (without the 2x pretrained model) and we evaluate on the validation set of DIV2K and the standard benchmarks: Set5 \citep{bevilacqua2012low}, Set14 \citep{zeyde2012single}, BSD100 \citep{martin2001database}, Urban100 \citep{huang2015single}. We follow the setting of  \citet{lim2017enhanced} and exploit their small-size EDSR model as our baseline architecture. We adopt the standard metrics PNSR and SSIM. The trade-off parameters $\lambda_d$ and $\lambda_t$ are set to $0.1$ and $1$, respectively.
Table \ref{tab:superresolution} shows that both $\mL_d$ and $\mL_t$ contribute to improving the baseline and adding both terms has the largest impact.

\begin{table}[t]
	\caption{Quantitative comparison of super-resolution results with public benchmark and DIV2K validation set. We report results as PSNR(dB)/SSIM.  \textbf{Bold} numbers indicate the best performance.}
	\centering
		\scalebox{0.8}{\begin{tabular}{c|ccccc}
			\hline
			\multirow{1}[0]{*}{Method}
			& Set5 & Set14 & B100 & Urban100 & DIV2K \\
			\hline
% 			Eigen et al. \citep{eigen2014depth} & 0.769 & 0.158 & 0.641 & -  \\
			Baseline \cite{lim2017enhanced} & 32.241/ 0.8656 & 28.614/ 0.7445 & 27.598/ 0.7120 & 26.083/ 0.7645 & 28.997/ 0.8189  \\
			$+$ Information dropout \cite{achille2018information} & 32.219/ 0.8649 & 28.626/ 0.7441 & 27.594/ 0.7113 & 26.059/ 0.7624 & 28.980/ 0.8182  \\
			$+$ Oridnal Entropy \cite{zhang2023improving} & 32.280/ 0.8653 & 28.659/ 0.7445 & 27.614/ 0.7119 & 26.117/ 0.7641 & 29.005/ 0.8188  \\
      \hline
			 $+ \mL'_d$ & 32.252/ 0.8653 & 28.625/ 0.7443 & 27.599/ 0.7118 & 26.078/ 0.7638 & 28.989/ 0.8186  \\
			 $+ \mL_d$ & 32.293/ 0.8660 & 28.644/ 0.7453 & 27.619/ 0.7127 & 26.151/ 0.7662 & 29.022/0.8197  \\
			 $+ \mL_t$ & \bf{32.322}/ 0.8663 & 28.673/ 0.7455 & 27.624/ 0.7127 & 26.169/ 0.7665 & 29.031/ 0.8196  \\
			 $+ \mL_d+ \mL_t$ & 32.288/ \bf{0.8663} & \bf{28.686}/ 0.7462 & \bf{27.627}/ 0.7132 & \bf{26.179}/ 0.7670 & \bf{29.038}/ 0.8201  \\
			\hline
		\end{tabular}}
\end{table}

% exploit EDSR \citep{lim2017enhanced} as our baseline architecture and follow the setting of the work

\subsection{Depth estimation on NYU-Depth-v2 dataset}
% \label{subsection:depth}
We exploit the NYU-Depth-v2 \citep{silberman2012indoor} for the depth estimation task. We follow the setting of \citet{lee2019big} and use ResNet50 as our baseline architecture.  We exploit the standard metrics of threshold accuracy $\delta_1, \delta_2, \delta_3$, average relative error (REL), root mean squared error (RMS)  and average $\log_{10}$ error.  The trade-off parameters $\lambda_d$ and $\lambda_t$ are both set to $0.1$. Table \ref{tab:nyu} shows that exploiting $\mL_t$ and $\mL_d$ results in reduction of $6.7\%$ and $8.9\%$ in the $\delta_1$ and $\delta_2$ errors, respectively. 

\subsection{Different improvement gap between synthetic and real-world datasets}
Our synthetic dataset is relatively simple and clean; the corresponding task (coordinate prediction) is directly related to the topology of the target space, hence the larger improvement on the synthetic data. The improvements on real-world datasets are significant (verified by Welch's t-test), and are also comparable to or better than competing works in the literature \cite{zhang2023improving, yang2021delving, lim2017enhanced, zhang2018residual}. Under the same settings, our improvements are competitive with recently published works. For age estimation on Age-DB, we improve the MAE (all) by almost 2-fold (0.48 vs. 0.25, see Table \ref{tab:appendix_age}). For super-resolution on DIV2K, we improve the PSNR (Set5, see Table \ref{tab:appendix_superresolution}) by 0.05, while typical state-of-the-art papers in super-resolution show increments of 0.01 on PSNR (Set5) \cite{lim2017enhanced, zhang2018residual}. For depth estimation, our improvements are on par with the competing method \cite{zhang2023improving} (REL: 0.144 vs 0.143, see Table \ref{tab:nyu}).

\begin{table}[t!]
	\caption{Quantitative comparison (MAE) on AgeDB. We report results as mean $\pm$ standard variance over $3$ runs.
	}
 	\label{tab:appendix_age}
	\centering
		\scalebox{1}{\begin{tabular}{c|ccccc}
			\hline
			\multirow{1}[0]{*}{Method}
			& ALL & Many & Med. & Few  \\
			\hline
			Baseline & 7.80 $\pm$ 0.12 & 6.80 $\pm$ 0.06 & 9.11	$\pm$ 0.31 & 13.63	$\pm$ 0.43  \\
			+ LDS \cite{yang2021delving} & 7.67 & 6.98 & 8.86 & 10.89  \\
			+ FDS \cite{yang2021delving} & 7.55 & 6.50 & 8.97 & 13.01  \\
			+ LDS + FDS \cite{yang2021delving} & 7.55 & 7.01 & 8.24 & 10.79  \\
			+ PH-Reg (ours) &  7.32	$\pm$ 0.09 & 6.50	$\pm$ 0.15 & 8.38	$\pm$ 0.11 & 12.18	$\pm$ 0.38  \\
   \hline
   + LDS + FDS vs. Baseline & + 0.25 & -0.19 & +0.97 & +2.94  \\
   + ous vs. Baseline & + 0.48 & +0.30 & +0.73 & +1.45  \\
   \hline
		\end{tabular}}
\end{table}

\begin{table}[t]
	\caption{Quantitative comparison (PSNR(dB)) of super-resolution results with public benchmark and DIV2K validation set. \textbf{Bold} numbers indicate the best performance.}
	\label{tab:appendix_superresolution}
	\centering
		\scalebox{1}{\begin{tabular}{c|ccccc}
			\hline
			\multirow{1}[0]{*}{Method}
			& Set5 & Set14 & B100 & Urban100 & DIV2K \\
			\hline
			Baseline (small-size EDSR \cite{lim2017enhanced}) & 32.24 & 28.61 & 27.60 & 26.08 & 29.00  \\
			EDSR \cite{lim2017enhanced} & 32.46 & 28.80 & 27.71 & 26.64 & 29.25  \\
			RDN \cite{zhang2018residual} & 32.47 & 28.81 & 27.72 & 26.61 & -  \\
			 Baseline + PH-Reg (ours) & 32.29 & 28.69 & 27.63 & 26.18 & 29.04  \\
      \hline
			 RDN vs. EDSR & +0.01 & +0.01 & +0.01 & -0.03 & -  \\
			 Ours vs. Baseline & +0.05 & +0.08 & +0.03 & +0.01 & +0.04  \\
			\hline
		\end{tabular}}
\end{table}

However, on NYU-Depth-V2, the representation of the head part (with many samples in the target space) is relatively well-learned, leading to a smaller impact on our regularizers. This might be a reason for the lower improvement. From the regression baseline (Figure \ref{fig_reg2}), the representation in the blue region (head part) already shows a lower intrinsic dimension and collapses like a line so there is little opportunity for our regularizers to have a strong impact (and hence the smaller improvement on the MAE (Many)). In contrast, the impact on the synthetic dataset and the green region (corresponding to the tail part, with limited samples in the target space) is more significant, resulting in a higher improvement.

Similar effects are observed on the other two real-world datasets, while the feature space on Age-DB tends to be a line (although the target space is discrete, it is too dense) and the feature space on DIV2K tends to be an object with a high density in the middle region (the target space is 3d).

\end{document}